\title{A Geometric Notion of Causal Probing}
\author{Cl{\'e}ment Guerner\quad Tianyu Liu \quad Anej Svete\quad Alexander Warstadt \quad Ryan Cotterell \\
\setlength{\fboxsep}{2.5pt}%
\setlength{\fboxrule}{2.5pt}%
\fcolorbox{white}{white}{
    $\{$\texttt{\href{mailto:cguerner@inf.ethz.ch}{cguerner}}, \texttt{\href{mailto:tianyu.liu@inf.ethz.ch}{tianyu.liu},
    }\texttt{\href{mailto:anej.svete@inf.ethz.ch}{anej.svete}, }\texttt{\href{mailto:awarstadt@inf.ethz.ch}{awarstadt},
    }\texttt{\href{mailto:ryan.cotterell@inf.ethz.ch}{ryan.cotterell}}$\}$\texttt{@inf.ethz.ch}
} \\
    {%
\setlength{\fboxsep}{2.5pt}%
\setlength{\fboxrule}{2.5pt}%
\fcolorbox{white}{white}{
    \includegraphics[width=.15\linewidth]{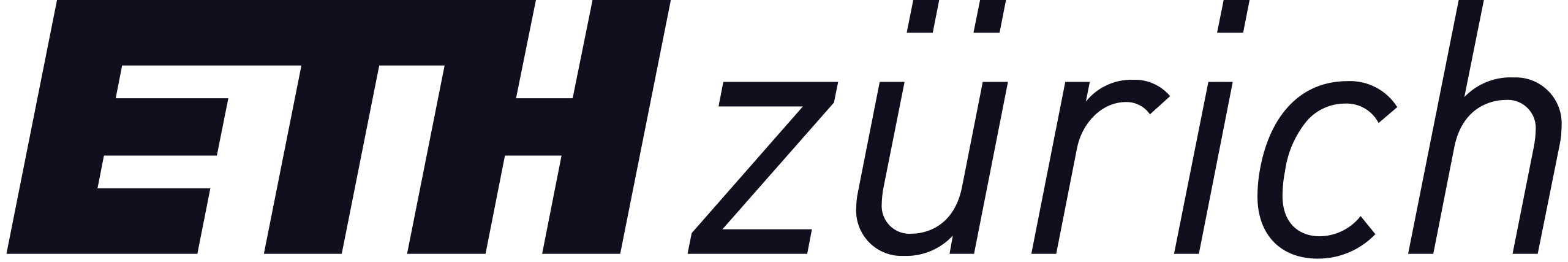}
}
}}
\begin{document}
\maketitle

\begin{abstract}
The linear subspace hypothesis \citep{bolukbasi2016} states that, in a language model's representation space, all information about a concept such as verbal number is encoded in a linear subspace.
Prior work has relied on auxiliary classification tasks to identify and evaluate candidate subspaces that might give support for this hypothesis.
We instead give a set of intrinsic criteria which characterize an ideal linear concept subspace and enable us to identify the subspace using only the language model distribution.
Our information-theoretic framework accounts for spuriously correlated features in the representation space \citep{kumar2022probing} by reconciling the statistical notion of concept information and the geometric notion of how concepts are encoded in the representation space. 
As a byproduct of this analysis, we hypothesize a causal process for how a language model might leverage concepts during generation. 
Empirically, we find that linear concept erasure is successful in erasing most concept information under our framework for verbal number as well as some complex aspect-level sentiment concepts from a restaurant review dataset.
Our causal intervention for controlled generation shows that, for at least one concept across two languages models, the concept subspace can be used to manipulate the concept value of the generated word with precision.
\newline
\vspace{0.1em}

\hspace{.1em}\raisebox{-0.2em}{\includegraphics[width=1.05em,height=1.05em]{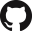}}%
\hspace{.3em}%
\parbox{\dimexpr\linewidth-2\fboxsep-2\fboxrule}{\small \url{https://github.com/rycolab/causalgeom}}
\end{abstract}

\section{Introduction} \label{sec:intro}

The reliance of language models (LMs) on concepts to make predictions---especially linguistic concepts such as \conceptf{verbal-number}\footnote{Throughout the text, we will use a distinguished typesetting to refer to concepts. For instance, the concept of a bird is written as \conceptf{bird}.}---is a well-studied phenomenon \citep{ravfogel-etal-2021-counterfactual,lasri-etal-2022-probing, amini-etal-2023-causal, arora-etal-2024-causalgym}. Earlier studies on this topic test whether an LM uses the concept of \conceptf{verbal-number} by giving it a forced choice between a grammatical and an ungrammatical variant of a sentence \cite{linzen2016assessing, marvin-linzen-2018-targeted, goldberg2019assessing, lasri-etal-2022-probing}. Consider, for example, the sentences: \vspace{-5pt}

\ex. \a. \label{ex:gramm} \textit{The kids \textbf{walk} the dog.}\\ 
        {\small the kid.\textsc{pl} walk.\textsc{3pl.pres} the dog.\textsc{sg}}\\
     \b. \label{ex:ungramm} $^*$\textit{The kids \textbf{walks} the dog.}\\
        {\small the kid.\textsc{pl} walk.\textsc{3sg.pres} the dog.\textsc{sg}}\\[-15pt]
\par

\citet{goldberg2019assessing} shows that LMs can achieve near perfect accuracy when forced to choose between two such variants. This result suggests that LMs make use of \conceptf{verbal-number} and other concepts to perform next-word prediction, but tells us little about how the representation spaces of these models encode such concepts.

Our primary contribution is to construct a novel geometric notion of what it means for a neural LM's representation space\footnote{For now, we define a representation space simply as the $d$-dimensional vector space that a language model relies on to encode text. We propose a more formal definition in \cref{sec:lmsconcepts}.} to have information about a concept. 
Following \citet{bolukbasi2016} and \citet{ravfogel2022linear}, we argue that concepts are naturally operationalized by \emph{linear} subspaces. 
Linear subspaces lend themselves to tractable algorithms, and they have a simple geometric interpretation which makes it possible to erase a concept from a representation.
Existing work \citep{lasri-etal-2022-probing, ravfogel-etal-2023-linear} has relied on $\mathcal{V}$-information \cite{xu2020theory} to quantify the amount of information in the representation space of a language model, before and after concept erasure. 
This measure is \emph{extrinsic} to the language model, in the sense that it relies on a variational family $\mathcal{V}$ of auxiliary classifiers to measure concept information.
In contrast, we propose an \emph{intrinsic}, information-theoretic \citep{shannon1948mathematical} definition of information, by which we mean that information is quantified using distributions induced from the language model, i.e., without relying on an additional classifier.\looseness=-1

We show, via an example inspired by \citet{kumar2022probing}, that a naïve approach to measuring intrinsic information in a subspace falls victim to spurious correlations. 
Specifically, while a ground truth, \emph{causal} concept subspace may exist in the representation space, correlated non-concept features can also contain information about the concept, complicating the task of estimating concept information in either subspace.
Our framework breaks the dependence between the concept subspace and its orthogonal complement, allowing us to \emph{correctly} compute information contained in either subspace while marginalizing out the other.
This approach is counterfactual in the sense that it creates representations that would not otherwise occur under the language model.
Crucially, it allows us to talk about the mutual information between linear subspaces and concepts.\looseness=-1

We derive four geometric properties within our counterfactual framework that characterize a precise geometric encoding of a concept.
First, \textbf{erasure} is the condition that the orthogonal complement of the concept subspace should contain \emph{no} information about the concept. 
Second, \textbf{encapsulation} states that projecting a representation onto our concept subspace should preserve \emph{all} the information about the concept.
Third, \textbf{stability} quantifies the requirement that projection onto the orthogonal complement of our concept subspace should preserve non-concept information.
Finally, \textbf{containment} ensures that the concept subspace does not contain additional information beyond the concept.\looseness=-1

Empirically, we study linguistic concepts \conceptf{verbal-number} in English and \conceptf{grammatical-gender} in French.  
We find, for \conceptf{verbal-number}, that the LEACE method for linear concept erasure \citep{belrose2023leace} yields a one-dimensional concept subspace which, according to our novel counterfactual metrics, contains a large share of concept information while leaving non-concept information relatively untouched. 
We then leverage our intrinsic measure of information to posit a causal graphical model by which a latent concept may govern LM text generation.
This model enables us to derive a causal controlled generation method by manipulating the concept component of a representation.
And, indeed, we find evidence that it is possible to use a one-dimensional subspace to control the generation behavior of two language models with respect to \conceptf{verbal-number}, but not for \conceptf{grammatical-gender}.
We test our causal intervention against the CEBaB \citep{abraham-etal-2022-cebab} benchmark, and find our do-intervention improves the performance of INLP \citep{ravfogel-etal-2020-null} and LEACE as causal effect estimators.

\section{Concepts and Information} \label{sec:lmsconcepts} 

In this section, we build towards a definition of mutual information between representations and the concept of interest.
\looseness=-1

\subsection{Language Modeling Basics} \label{sec:lms}

A language model is a probability distribution $\plm$ over $\kleene{\alphabet}$, the Kleene closure over an alphabet $\alphabet$.
We parameterize $\plm$ in an autoregressive manner \citep{du-etal-2023-measure} as follows: \vspace{-10pt} 
\begin{equation}\label{eq:lm}
\plm(\str) = \plm(\eos \mid \str) \prod_{t=1}^{T} \plm(\wordt \mid \cxt) \vspace{-5pt}
\end{equation}
where $\wordt \in \alphabet$ refers to $t$-th word\footnote{We refer to $\word \in \alphabet$ as words for simplicity, even though in the context of language modeling, these are often called subwords, tokens, or symbols.} in a string $\str \in \kleene{\alphabet}$, $\cxt$ represents the first $t-1$ words of $\str$, and $\eos \notin \alphabet$ being a distinguished end-of-string symbol.

Many language models make use of contextual representations, i.e., they encode a textual context $\cxt$ as a real-valued column vector $\cxtenc \in \Rd$.
Generally, $\cxtenc$ is deterministically computed from the context string $\cxt$\footnote{We relax this assumption later on, such that $\cxtenc$ can be stochastic given $\cxt$. One example of a language model with stochastic contextual embeddings is \citet{bowman-etal-2016-generating}.\looseness=-1}, such that the representation space of \cref{eq:lm} is defined as
\begin{equation}
\repspace \defeq \Big\{ \bh(\str) \mid \str \in \kleene{\alphabet} \Big\} \subseteq \Rd.\footnote{\textnormal{Despite consisting of real vectors, the cardinality of $\repspace$ is \emph{countably} infinite, because it contains exactly one element for every string in the countably infinite set $\kleene{\alphabet}$.
Thus, summing over $\repspace$\ is discrete and does not require integration.}}
\end{equation}

\subsection{Language Models and Concepts} \label{sec:concepts}

We now discuss an exact sense in which a language model can be said to encode a concept. 
First, we define a concept based on the possible values it can take.
We formalize this with a \defn{concept set}, a finite, non-empty set $\concepts$ whose elements are those values.
For example, we take the concept set for \conceptf{verbal-number} to include three values: \conceptval{sg} (e.g., \wordf{walks}), \conceptval{pl} (e.g., \wordf{walk}), and \conceptval{n/a} (e.g., \wordf{consternation}). 
For various reasons, including syncretism \citep{baerman2007syncretism}, some verbs in English can have ambiguous concept value depending on context.
For instance, in the sentence \wordf{You walked to the store}, \wordf{walked} can be \conceptval{sg} or \conceptval{pl}.
We find similar facts for other concepts in different languages, e.g., for \conceptf{grammatical-gender} in French, the adjective \wordf{marron} can be both \conceptval{fem} and \conceptval{msc}.\looseness=-1

To relate language models to concept sets, we introduce a probability distribution $\iota(\concept \mid \cxt, \word)$. 
$\iota$ tells us the probability that, in the sequential context $\cxt \in \kleene{\alphabet}$, word $\word \in \alphabet$ is annotated with the concept value $\concept \in \concepts$.
For now, we make a simplifying assumption that $\iota$ is deterministic, i.e., $\iota(\concept \mid \cxt, \word) \in \{0, 1\}$ for all $\concept \in \concepts$, $\word \in \alphabet$, and $\cxt \in \kleene{\alphabet}$.\footnote{To illustrate this formalism, consider the concept \conceptf{verbal-number} and \cref{ex:gramm,ex:ungramm}. The concept set for \conceptf{verbal-number} is $\concepts = \{\conceptvalfootnote{sg}, \conceptvalfootnote{pl}, \conceptvalfootnote{n/a}\}$, and $\iota$ maps as follows, e.g., $\iota(\conceptvalfootnote{sg} \mid \wordf{The kids}, \wordf{walk}) = 0$, $\iota(\conceptvalfootnote{pl} \mid \wordf{The kids}, \wordf{walk}) = 1$.
}
This assumption will be relaxed in \cref{sec:causal} with a stochastic operationalization of concepts.

\looseness=-1




\subsection{Unigram Information}\label{sec:reps-information}

To construct a mutual information between the model's notion of a concept and its contextual representations, we require a joint distribution between a concept-valued random variable and a representation-valued random variable. In order for this estimate to be intrinsic, we obtain this distribution from the language model itself.\looseness=-1

We begin by defining the \defn{joint induced unigram} distribution of the language model over words, concepts, and representations in \cref{eq:3-var-joint-unigram}. 
In words, this distribution tells how frequently each word $\word \in \alphabet$ co-occurs with a concept value $\concept \in \concepts$ and a representation $\bh \in \repspace$, on average, in a string $\str \sim \plm$ of length $T$:
\vspace{-15pt}
\begin{align} \label{eq:3-var-joint-unigram}
    &\punigram(\word, \concept, \bh) \defeq 
    \sum_{\str \in \kleene{\alphabet}} \plm (\str)   \\[-5pt]
    & \;\; \frac{\sum_{t=1}^{T}  \iota(\concept \mid \cxt, \wordt) \mathbbm{1}\Big\{ \word = \wordt \land
    \bh = \cxtenc \Big\}}{T} \nonumber
\end{align}

We can now use \cref{eq:3-var-joint-unigram} to compute our intrinsic measure of concept information in representations:\looseness=-1
\begin{equation} \label{eq:cxtconceptmi}
\MI(\rvC ; \rvH) =
\sum_{\concept \in \concepts}
\sum_{\bh \in \repspace} \punigram(\concept, \bh) \log \!\frac{\punigram(\concept, \bh)}{\punigram(\concept) \punigram(\bh)}
\end{equation}
where $\rvC$ is a $\concepts$-valued random variable, $\rvH$ is a $\repspace$-valued random variable, and $\punigram(\concept, \bh)$ is obtained by marginalizing out $\word$ from \cref{eq:3-var-joint-unigram}. 
\Cref{eq:cxtconceptmi} tells us how much information on average a representation $\bh \in \repspace$ encodes about the identity of a concept $\concept \in \concepts$. \looseness=-1


Next, we define the following conditional mutual information: 
\begin{align}\label{eq:conditionalmi}
\MI(&\rvX ; \rvH \mid \rvC) = \\[-5pt]
&\sum_{\concept \in \concepts}
\sum_{\word \in \alphabet} \sum_{\bh \in \repspace} 
\punigram(\word, \bh, \concept) \log \frac{\punigram(\word, \bh \mid \concept) }{\punigram(\word\mid \concept) \punigram(\bh \mid \concept)} \nonumber
\end{align}
This quantity measures, given a particular concept value $\concept \in \concepts$, how much additional information about a word $\word \in \alphabet$ is encoded in the model's representations. 
Our information-theoretic framework can be generalized to handle different language-generating processes, e.g., different decoding algorithms for language models, or natural text generated by a process other than the language model under study.\looseness=-1

\section{A Geometric Encoding of Concepts} \label{sec:functionalsubspace}

The \defn{linear subspace hypothesis} \cite{bolukbasi2016} makes a prediction about how the concept information we quantify in the previous section is represented geometrically in the LM's representation space. 
Specifically, it postulates that there exists a \emph{linear subspace} $\linspacec \subseteq \Rd$
that contains all of the information about a concept with values $\concepts$.\footnote{Note that $\linspacec$ is \emph{not} a linear subspace in the linear-algebraic sense because it is countable and thus not closed under scalar multiplication. 
}
This hypothesis has been tested on various linguistic concepts, including \conceptf{verbal-number} \citep{ravfogel-etal-2021-counterfactual,lasri-etal-2022-probing, amini-etal-2023-causal} and \conceptf{grammatical-gender} \citep{amini-etal-2023-causal}.
We follow in this vein, and decompose the representation space $\repspace$ into a concept linear subspace and an orthogonal, non-concept linear subspace.
Then, we provide four information-theoretic metrics that characterize these subspaces in terms of the information that they contain.

\subsection{Concept Partition} \label{sec:concept-partition}


Given a concept set $\concepts$, we define a partition of a language model's representation space $\repspace$ into a \defn{concept subspace} $\linspacec$ and its orthogonal complement, the \defn{non-concept subspace} $\linspaceccomp$. 
We refer to $\bPk \in \R^{d \times d}$ as the orthogonal projection matrix that projects onto $\linspaceccomp$, i.e., $\bPk \bh = \text{proj}_{\linspaceccomp}(\bh)$.
In turn, $\eyeminusPk$ projects onto the concept subspace $\linspacec$ with dimensionality $|\concepts|-1$, such that $(\eyeminusPk) \bh = \text{proj}_{\linspacec}(\bh)$.
We refer to the partition of $\Rd$ into $\linspacec$ and $\linspaceccomp$ as an \defn{information partition}.\looseness=-1

We use \cref{eq:cxtconceptmi} to define the information about the concept encoded in both. 
Consider, for example, information in $\linspaceccomp$ about $\concepts$ on average over textual contexts:\vspace{-5pt}
\begin{align} \label{eq:subspace-mi}
\MI(\rvC ;& \bPk \rvH) =
\\ &\sum_{\concept \in \concepts}
\sum_{\bh \in \repspace} 
\punigram(\concept, \bPk \bh) \log \!\frac{\punigram(\concept, \bPk \bh)}{\punigram(\concept) \punigram(\bPk \bh)} \nonumber
\end{align}
where the language model's representations are orthogonally projected onto $\linspaceccomp$ using $\bPk$. 
\cref{eq:subspace-mi} relates the \emph{geometric} notion of a linear subspace with the \emph{information-theoretic} notion of information.
Thus, if $\MI(\rvC ; \bPk \rvH)$ is low, we can say that $\bPk$ erases a lot of concept information in $\repspace$ by projecting onto the subspace $\linspaceccomp$.
We denote $\repspacepar \defeq \{ \eyeminusPkh \mid \bh \in \repspace \}, \repspacebot \defeq \{ \bPk  \bh \mid \bh \in \repspace \}$, and refer to $ \rvHconcept, \rvHerase$ as random variables corresponding to contextual representations projected onto concept and non-concept subspaces, respectively.\looseness=-1

\begin{table}[t!]
    \centering
    \begin{minipage}[b]{1.0\columnwidth}    
        \centering
        \includegraphics[width=0.8\columnwidth,trim={2mm 0mm 2mm 0},clip]{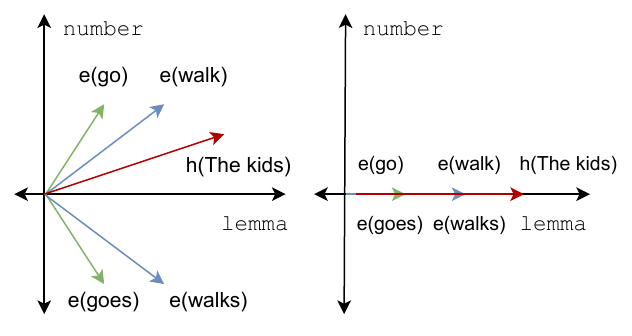}
            \captionof{figure}{Example of erasure of a \conceptf{verbal-number} subspace, when predicting the next word given \wordf{The kids}. The representation space is two-dimensional with the $y$-axis representing the correct subspace encoding the concept $\conceptf{verbal-number}$, while the $x$-axis encodes the lemma. Word representations are denoted with $\mathbf{e}$ and contextual representation with $\bh$. On the left, we have the original representation space, and on the right, we have the space resulting from erasing information in our concept subspace, i.e., setting the $y$-coordinates of all vectors in the space to 0.\looseness=-1}\label{fig:geomexample}
    \end{minipage}\vspace{5pt}
    \begin{minipage}[b]{1.0\columnwidth}
        \captionsetup{type=table}
        \begin{subtable}{\columnwidth}
            \small
            \centering
            \begin{tabular}{ ccccc } 
                 \toprule
                  & \wordf{walks} & \wordf{walk} & \wordf{goes} & \wordf{go} \\
                 \midrule
                 \conceptval{sg} & 0 & 0 & 0.7 & 0  \\ 
                 \conceptval{pl} & 0 & 0.3 & 0 & 0 \\ 
                 \bottomrule
            \end{tabular}
        \end{subtable} 
        \caption{Hypothetical joint unigram distribution $\punigram(\word, \concept)$ of \conceptf{verbal-number} and word. The lemma \wordf{walk} is only used as \conceptvalfootnote{pl} and \wordf{go} only as \conceptvalfootnote{sg}.} \label{tab:counterexample-joint}
        \vspace{-15pt}
    \end{minipage}
\end{table}

\subsection{The Perils of Correlation} \label{sec:example}

\Cref{eq:subspace-mi} suggests an attractive property we might ask from $\bPk$: It should satisfy $\MI\left(\rvC; \bPk \rvH\right) = 0$, i.e., completely erase the information about the concept by projecting onto $\linspaceccomp$.
However, as we show next, this na{\"i}ve characterization is flawed.
We illustrate this point with a counterexample inspired by \citet{kumar2022probing}, shown in \cref{fig:geomexample}.
Intuitively, such a transformation constitutes successful erasure.\footnote{One might, but probably shouldn't, refer to the y-axis as the \emph{causal} subspace, in the sense that manipulating the value in that subspace would result in changing precisely the concept encoded by the representation while leaving other aspects intact.}
To the extent that such a subspace exists in reality, finding the $\bPk$ that erases this subspace seems like the correct objective.

Now, consider the hypothetical joint word--concept unigram distribution $\punigram(\word, \concept)$ in \Cref{tab:counterexample-joint}. 
Under this distribution, a projection matrix $\bPk$ that erases the correct $y$-axis as shown in \cref{fig:geomexample} is \emph{not} the minimizer of \cref{eq:subspace-mi}. 
Knowledge of the lemma alone reveals the \conceptf{verbal-number}, because $\rvHerase$ ($x$-axis) and $\rvHconcept$ ($y$-axis) are heavily correlated.
This means that $\MI(\rvC; \rvHerase) = 0.88 > 0$ in our toy example in \cref{fig:geomexample}. 
In order to have $\MI(\rvC; \rvHerase) = 0$, we would need to let $\bPk = \mathbf{0}$, thereby erasing all lemma information as well.
Thus, requiring $\bPk$ to satisfy $\MI(\rvC; \rvHerase) = 0$ does not characterize successful erasure because it requires removing all spuriously correlated features.\looseness=-1

\subsection{A Counterfactual Unigram Distribution} \label{sec:counterfactual}

The underlying problem with the example given in \cref{sec:example} is that $\rvHconcept$ and $\rvHerase$ have a common cause that introduces a spurious correlation---the $\kleene{\alphabet}$-valued context random variable $\rvbX$.
This means $\MI(\rvHerase; \rvHconcept) > 0$, i.e., $\rvHerase$ and $\rvHconcept$ are \emph{not} statistically independent. 
We resolve this issue by building a variant of our information-theoretic objective in \cref{eq:subspace-mi} that \emph{assumes} these two variables are statistically \emph{independent}, i.e., $\MI(\rvHerase; \rvHconcept) = 0$.
Under this assumption, $\rvHerase$ would contain no information about the concept, and identification of $\rvHconcept$ would be possible via mutual information.
While this assumption likely never holds for a concept in practice, this does not matter here---we are crafting a metric under which the correct subspace will be optimal.\looseness=-1

We denote with $\bhpar \defeq (\eyeminusPk) \bh$ and $\bhbot \defeq \bPk \bh$ the projections onto the concept and non-concept subspace for $\bh \in \repspace$. 
Marginalizing with respect to the induced unigram distribution defined in \cref{sec:lmsconcepts}, we arrive at the following unigram distributions: 
\begin{align}\label{eq:counterfactual-unigram-dist-bot}
    \punigram(\bhbot) &\defeq \sum_{\bh \in \repspace}  \mathbbm{1}\{\bhbot = \bPk \bh\} \punigram(\bh) \\
\label{eq:counterfactual-unigram-dist-par}     
    \punigram(\bhpar) &\defeq \sum_{\bh \in \repspace} \mathbbm{1}\{\bhpar = (\eyeminusPk) \bh\} \punigram(\bh)     
\end{align}
We now construct a variant of our induced unigram $\punigram(\word, \concept, \bh)$ that assumes independence between $\bhbot$ and $\bhpar$, i.e., $\qunigram(\bh) = \qunigram(\bhbot, \bhpar) \defeq \punigram(\bhbot)\,\punigram(\bhpar)$.
This \defn{counterfactual unigram distribution} $\qunigram$ assigns probability mass to $(\bhbot, \bhpar)$ pairs which, under $\punigram(\bh)$, would have zero probability.\looseness=-1
\begin{align}\label{eq:counterfactualunigram}
  \qunigram(\word, &\concept,\bhpar, \bhbot) \defeq \sum_{\cxt \in \kleene{\alphabet}}\,\iota(\concept \mid \word, \cxt) \\
&\plm(\word \mid \bhpar, \bhbot)\,\plm(\cxt)\, \punigram(\bhpar)\, \punigram(\bhbot) \nonumber
\end{align}
The choice of the name counterfactual, as well as the implications of this decoupling, will be made precise in \cref{sec:causal} when we introduce the causal interpretation of the word--concept model.\looseness=-1

We define the \defn{counterfactual mutual information} between the concept and the projection onto the non-concept subspace as\looseness=-1
\begin{align}  \label{eq:miqconceptcounterfactual}
    \MIq (& \rvC; \rvHerase ) \defeq \\
    &\sum_{\concept \in \concepts} \sum_{\bhbot \in \repspacebot} 
    \qunigram(\concept, \bhbot) \log \!\frac{\qunigram(\concept, \bhbot)}{\qunigram(\concept) \qunigram(\bhbot)} \nonumber
\end{align}
Importantly, \cref{eq:miqconceptcounterfactual} is minimized by the correct subspace in our example in \cref{sec:example}. 
Note that $\MIq (\rvC; \rvHconcept)$ can also be obtained by marginalizing out $\bhbot$ instead.
Finally, we define $\MIq (\rvX ; \rvHerase \mid \rvC)$ by using $\qunigram$ instead of $\punigram$ in \cref{eq:conditionalmi}.\looseness=-1

\subsection{Erasure and Encapsulation} \label{sec:erasure}

We now give formal definitions of erasure and encapsulation based on \cref{eq:miqconceptcounterfactual}. 
These two notions, combined, determine the extent to which a projection matrix $\bPk$ has decomposed the representation space into concept and non-concept subspaces.

\begin{defin}[Counterfactual Erasure]\label{def:eraser}
Let $\rvHerase \defeq \bPk \rvH$ be an $\Rd$-valued random variable.
An orthogonal projection matrix $\bPk \in \R^{d \times d}$ is an $\varepsilon$-\defn{eraser} of $\concepts$ if $\MIq(\rvC; \rvHerase) < \varepsilon$.\looseness=-1
\end{defin}
As $\varepsilon \to 0$, the subspace $\linspaceccomp$ characterized by an $\varepsilon$-eraser $\bPk$ for concept set $\concepts$ with respect to $\repspace$ encodes very little information about the concept.
This means that the language model is no longer able to determine the concept value required by the textual context when generating the next word.
We now show that given an $\varepsilon$-eraser $\bPk$, projecting onto its orthogonal complement with $\eyeminusPk$ preserves nearly all of the information.\looseness=-1

\begin{defin}[Counterfactual Encapsulation]\label{def:encapsulation}
Let $\rvHconcept \defeq \left(\eyeminusPk\right) \rvH$ be an $\Rd$-valued random variable.
An orthogonal projection matrix $\eyeminusPk \in \R^{d \times d}$ is an $\varepsilon$-\defn{encapsulator} 
of $\concepts$ if $\MIq(\rvC; \rvH) - \MIq(\rvC; \rvHconcept) < \varepsilon$.\looseness=-1
\end{defin}
The quantity $\MIq(\rvC; \rvH) - \MIq(\rvC; \rvHconcept)$ is always non-negative due to the data-processing inequality \citep[\S2.8]{coverthomas2006}.
Encapsulation operationalizes the idea that a subspace gives us all the information needed to correctly identify the concept value required by textual context.
In \cref{app:pleasantdecomposition}, we show that the mutual information can be additively decomposed: $\MIq(\rvC; \rvH) = \MIq(\rvC; \rvHerase) + \MIq(\rvC; \rvHconcept)$. 

\newcommand{\minnodesize}{6.5ex}
\begin{figure*}[ht!]
   \centering
    \begin{adjustbox}{width=0.8\textwidth}
    \begin{subfigure}{0.10\textwidth}
    \centering
    \begin{tikzpicture}[latent/.style={circle, draw, minimum size=\minnodesize}, 
                        obs/.style={circle, draw, fill=lightgray, minimum size=\minnodesize}, node distance=0.7cm]
        \node[latent]   (word) {$\rvX$};        
        \node[det, above=of word]   (repspace) {$\rvH$};
        \node[latent, above=of repspace]   (context) {$\rvbX$};
        
        \edge {context} {repspace} ; %
        \edge {repspace} {word} ; %
    \end{tikzpicture}
    \caption{} \label{fig:causal-graph-a}
    \end{subfigure}
    \hspace{1ex}%
    \begin{subfigure}{0.29\textwidth}
    \centering
    \begin{tikzpicture}[latent/.style={circle, draw, minimum size=\minnodesize}, 
                        obs/.style={circle, draw, fill=lightgray, minimum size=\minnodesize}]
        
        \node[latent]   (word) {$\rvX$};        
        \node[det, above=0.5cm of word]   (repspace) {$\rvH$};
        \node[det, above right=1.3cm of repspace, yshift=-5.5mm] (conceptsubspace) {$\rvHconcept$};
        \node[det, above left=1.3cm of repspace, yshift=-5.5mm]     (compsubspace) {$\rvHerase$};
        \node[const, left=of conceptsubspace, xshift=2.75mm]     (bp) {$\bPk$};
        \node[latent, above=0.5cm of compsubspace]   (context) {$\rvbX$};
        \node[latent, above=0.5cm of conceptsubspace]   (concept) {$\rvC$};        
        
        \edge[color=blue] {context} {compsubspace, concept} ; %
        \edge {context} {conceptsubspace} ; %
        \edge {conceptsubspace} {repspace} ; %
        \edge[color=blue] {compsubspace} {repspace} ; %
        \edge {bp} {compsubspace, conceptsubspace} ; %
        \edge[] {concept} {conceptsubspace} ; %
        \edge {repspace} {word} ; %

    \end{tikzpicture}
    \caption{} 
    \label{fig:causal-graph-b}
    \end{subfigure}
    \hspace{4ex}%
    \begin{subfigure}{0.29\textwidth}
    \centering
    
    \begin{tikzpicture}[latent/.style={circle, draw, minimum size=\minnodesize}, 
                        obs/.style={circle, draw, fill=lightgray, minimum size=\minnodesize}]
        
        \node[latent]   (word) {$\rvX$};        
        \node[det, above=0.5cm of word]   (repspace) {$\rvH$};
        \node[det, above right=1.3cm of repspace, yshift=-5.5mm] (conceptsubspace) {$\rvHconcept$};
        \node[det, above left=1.3cm of repspace, yshift=-5.5mm]     (compsubspace) {$\rvHerase$};
        \node[const, left=of conceptsubspace, xshift=2.75mm]     (bp) {$\bPk$};
        \node[latent, above=0.5cm of compsubspace]   (context) {$\rvbX$};
        \node[latent, fill=red!30!white, above=0.5cm of conceptsubspace]   (concept) {$\rvC$};        
        
        \edge {context} {compsubspace, conceptsubspace} ; %
        \edge {conceptsubspace} {repspace} ; %
        \edge {compsubspace} {repspace} ; %
        \edge {bp} {compsubspace, conceptsubspace} ; %
        \edge {concept} {conceptsubspace} ; %
        \edge {repspace} {word} ; %
    \end{tikzpicture}
    \caption{}
    \label{fig:causal-graph-c}
    \end{subfigure}
    \hspace{2ex}%
    \begin{subfigure}{0.29\textwidth}
    \centering
    
    \begin{tikzpicture}[latent/.style={circle, draw, minimum size=1.2cm}, 
                        obs/.style={circle, draw, fill=lightgray, minimum size=1.2cm}]
        \node[latent]   (context) {$\rvbX$};
        \node[det, below=0.5cm of context]   (repspace) {$\rvH$};
        \node[det, below right=1.2cm of repspace, yshift=5.mm] (conceptsubspace) {$\rvHconcept$};
        \node[det, below left=1.2cm of repspace, yshift=5.mm]     (compsubspace) {$\rvHerase$};
        \node[const, left=of conceptsubspace, xshift=2.75mm]     (bp) {$\bPk$};
        \node[latent, below=0.3cm of conceptsubspace, yshift=-1.5mm]    (concept) {$\rvC$};
        \node[latent, below=0.3cm of compsubspace, yshift=-1.5mm] (x) {$\rvX$};

        \edge {context} {repspace} ; %
        \edge {repspace} {conceptsubspace,compsubspace} ; %
        \edge {conceptsubspace} {concept} ; %
        \edge {compsubspace,conceptsubspace} {x} ; %
        \edge {bp} {conceptsubspace,compsubspace}
    \end{tikzpicture}
    \caption{} \label{fig:causal-graph-d}
    \end{subfigure}
    \end{adjustbox}
    \caption{Causal graphical models that demonstrate how a concept may have a causal effect on word generation. Circles represent random variables and diamonds represent deterministic variables. $\rvbX, \rvC, \rvX$ represent the random variables for the textual context, the underlying concept, and the next word, respectively. $\rvH, \rvHconcept, \rvHerase$ are the representation at step $t$, its concept-related component, and its component whose concept-related information is erased by orthogonal projection matrix $\bPk$. \Cref{fig:causal-graph-a} shows the traditional autoregressive causal structure for generation. \Cref{fig:causal-graph-b} is our proposed causal structure for generation with a $\concepts$-valued latent variable $\rvC$, with the \backdoorcolor{backdoor path} from $\rvC$ to $\rvH$ shown in blue. \Cref{fig:causal-graph-c} is the causal structure induced by a do-intervention on $\rvC$. Finally, \cref{fig:causal-graph-d} is the causal structure implied by \citeposs{yang-klein-2021-fudge} concept-controlled generation approach.} \vspace*{-5mm} \label{fig:causal-graph}
\end{figure*}

\subsection{Containment and Stability} \label{sec:containstab}

Erasure and encapsulation do not consider the information content of the representation aside from the concept.
With perfect erasure and encapsulation, the learned orthogonal projection matrix $\bPk$ could erase much of the non-concept related information from $\linspaceccomp$. 
Specifically, if $\concepts$ is encoded non-linearly \citep{ravfogelkernel2022}, then erasure via a linear orthogonal projection could require the removal of additional dimensions that also contain non-concept information.
Therefore, in the concept erasure literature, tests of successful erasure are paired with a verification that the representations are not otherwise damaged \citep{kumar2022probing,ravfogel-etal-2020-null,ravfogel2022linear,ravfogelkernel2022,elazar-etal-2021-amnesic}. 
We, too, need an information-theoretic notion of preservation of non-concept information in $\rvHerase$.

Preserving information about non-concept aspects of $\cxt$ in $\rvHerase$ requires that $\rvHconcept$ \emph{only} capture information about the concept, i.e. that it should be the \emph{minimal} subspace that captures $\rvC$. 
Containment formalizes this notion by requiring that, conditioned on $\rvC$, $\rvHconcept$ contains little information about the next word $\rvX$.\looseness=-1
\begin{defin}[Counterfactual Containment] \label{def:containment}
Let $\bPk$ be an eraser for concept set $\concepts$ with respect to $\repspace$.
Let $\rvHconcept \defeq \left(\eyeminusPk\right) \rvH$ be an $\Rd$-valued random variable.
Then, we say that $\bPk$ is $\varepsilon$-\defn{contained} with respect to $\repspace$ and $\concepts$ if $\MIq(\rvX; \rvHconcept \mid \rvC) < \varepsilon$.\looseness=-1
\end{defin}
Lastly, we define stability to measure how much non-concept information about the next word is \emph{preserved} in the non-concept subspace $\rvHerase$.
Ideally, this should be as close as possible to the information present in the entire representation space, ignoring the information about the concept.
\begin{defin}[Counterfactual Stability] \label{def:stability}
Let $\bPk$ be an eraser for concept set $\concepts$ with respect to $\repspace$.
Let $\rvHerase \defeq \bPk \rvH$ be an $\Rd$-valued random variable.
Then, we say that $\bPk$ is an $\varepsilon$-\defn{stabilizer} with respect to $\repspace$ and $\concepts$ if $\MIq(\rvX; \rvH \mid \rvC) - \MIq(\rvX; \rvHerase \mid \rvC) < \varepsilon$.\looseness=-1
\end{defin}
The data processing inequality once again ensures that $\MIq(\rvX ; \rvH \mid \rvC) - \MIq(\rvX ; \rvHerase \mid \rvC) \geq 0$. 
Containment and stability together characterize the \emph{preservation} of information not related to concepts.\looseness=-1


\section{A Causal Graphical Model} \label{sec:causal}
We now propose a causal structure by which language models leverage concepts, in the form of a latent variable, in the generation process.
We relate this causal structure to the information partition definitions given in \cref{sec:functionalsubspace}.
This enables causal controlled generation via a do-intervention \cite{pearl2009causal} on the concept random variable $\rvC$.
We finish with a discussion of how our causal controlled generation approach improves upon existing approaches. 
\looseness=-1

\subsection{Concept as a Latent Variable}

We illustrate the traditional autoregressive causal structure, based on the model definition put forth in \cref{sec:lms}, in \cref{fig:causal-graph-a}.
The $\kleene{\alphabet}$-valued random variable $\rvbX$ represents the textual context that was previously sampled from the model, $\rvH$ is the deterministic contextual representation, and $\rvX$ the word which is sampled using $\rvH$.

To enable controlled generation with respect to the concept, we introduce a $\concepts$-valued latent variable $\rvC$ in the generation process, as shown in \cref{fig:causal-graph-b}.
We make two assumptions about $\rvC$.
First, we assume that the distribution of $\rvC$ is influenced by the textual context $\rvbX$, and, moreover, that $\rvC$ is not \emph{fully} determined by the context $\cxt$, i.e., $\rvC$ is stochastic.
This assumption is justified by the fact that the concept value of the next word may not be fully determined by the preceding context, as discussed in \cref{sec:lmsconcepts}.
Second, we assume that the concept is determined \emph{before} the word is sampled.
This enables controlled generation, as the concept can directly influence the sampled word $\word$.
In doing so, we break away from $\iota$, which deterministically assigned a concept value to a word based on the preceding context. 

Our two assumptions on $\rvC$ have an important implication: $\rvbX$ is no longer the only source of stochasticity in $\rvH$, as in \cref{fig:causal-graph-a}.
Rather, we assume that both $\rvbX$ as well as $\rvC$ influence the representation $\rvH$, i.e., $\bh = \bh\left(\cxt, \concept\right)$.
Although this construction is not the norm in neural language models, it is a minor departure from reality that greatly enables our model.

We note that our causal structure in \cref{fig:causal-graph-b} differs from the high-level causal abstraction proposed by \citet{geiger2023causal} for concept erasure via linear projection in the representation space. 
The authors don't include a concept-valued random variable in their causal abstraction. Relatedly, they argue that iterative null space projection \citep[INLP,][]{ravfogel-etal-2020-null} attempts to determine whether a concept is used by a model, not how it is used.
In \cref{sec:intro}, we contend that language models rely on some notion of linguistic concepts like \conceptf{verbal-number}, since they consistently predict the correct value. 
Our intrinsic information-theoretic framework helps us identify both whether \emph{and} how a concept encoding is used by a language model because we measure changes in the model's predictions when projecting representations onto the concept and non-concept subspaces.

\subsection{Causal Controlled Generation} \label{sec:causal-con-gen}

We now derive a formal relationship between erasure, encapsulation, stability, containment, and the assumed causal graph in \cref{fig:causal-graph-b}. 
First, inspecting \cref{fig:causal-graph-b}, we see that if we wish to intervene on $\rvC$ to influence $\rvX$, there is a single \backdoorcolor{backdoor path} from $\rvC$ to $\rvH$.
As shown in \cref{fig:causal-graph-c}, \emph{intervening} on $\rvC$ directly (denoted by $\causaldo(\rvC = \concept)$) removes the edge $\rvbX \rightarrow \rvC$, which lets us easily compute the distribution over the next word after intervention as follows
\begin{align} \label{eq:do-intervention}
p(&\word \mid \rvHerase = \bhbot, \causaldo(\rvC = \concept)) \\
&= \sum_{\boldg \in \repspace}p(\word \mid \rvH = \bhbot + (\eyeminusPk)\boldg)\,p(\boldg \mid \concept) \nonumber
\end{align}
where, as shown in \cref{fig:causal-graph-b}, we assume that $\bhbot$ is deterministic given the context $\cxt$. $\boldg$ is an $\Rd$-valued contextual representation that encodes a textual context $\cxt'$ with concept value $\concept$. With high probability, $\cxtenc$ and $\boldg(\cxt')$ will be different. This is the logical conclusion of our decision to treat $\bhbot$ and $\bhpar$ as statistically independent---we can intervene on the generation process by setting the value of the concept component independently.\looseness=-1

We now make good on our decision to name the counterfactual unigram distribution from \cref{eq:counterfactualunigram} as such.
Assuming the model \cref{fig:causal-graph-b}, a do-intervention on $\rvC$---as depicted in \cref{fig:causal-graph-c}---implies erasure, encapsulation, stability, and containment.
We make this idea formal in the following theorem.
\begin{restatable}{theorem}{bigboy}\label{thm:graph}
Consider a joint distribution $p$ that factors as in \cref{fig:causal-graph-b}, parameterized by orthogonal projection matrix $\bPk$.
Under the distribution\looseness=-1
\begin{align}
\pdo(&\word, \bhbot, \bhpar, \concept) = p(\word \mid \bhbot, \bhpar)\\
&p(\bhbot \mid \mathrm{do}\left(\rvC = \concept\right))\,p(\bhpar \mid \mathrm{do}\left(\rvC = \concept\right))\,p(\concept) \nonumber
\end{align}
we have that $\bPk$ is an $\varepsilon$-eraser, $\eyeminusPk$ is an $\varepsilon$-encapsulator, $\eyeminusPk$ is an $\varepsilon$-container and $\bPk$ is an $\varepsilon$-stabilizer for every $\varepsilon > 0$.\looseness=-1
\end{restatable}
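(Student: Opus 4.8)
The plan is to show that under $\pdo$, the two projected representations $\rvHerase$ and $\rvHconcept$ become statistically independent both unconditionally and conditionally on the concept, and then observe that each of the four properties (erasure, encapsulation, containment, stability) is an immediate consequence of such independence together with the data-processing inequality. First I would unpack the factorization of $\pdo$ given in the statement. Because the do-intervention severs the edge $\rvbX \to \rvC$ in \cref{fig:causal-graph-c}, and because $\rvHerase = \bPk \rvH$ and $\rvHconcept = (\eyeminusPk)\rvH$ are read off $\rvH$ which is produced from $\rvbX$ and $\rvC$, I would argue that under the intervention the marginal law of $\bhbot$ depends only on the (unintervened) context mechanism while the law of $\bhpar$ is the one re-sampled from the $\concept$-conditional — concretely, that $\pdo(\bhbot \mid \concept) = \punigram(\bhbot)$ and $\pdo(\bhpar \mid \concept) = \punigram(\bhpar)$ (the latter is exactly the $\boldg \sim p(\cdot\mid\concept)$ draw of \cref{eq:do-intervention}), so that $\pdo$ coincides with the counterfactual unigram $\qunigram$ of \cref{eq:counterfactualunigram}. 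This is the ``making good on the name counterfactual'' step the prose promises, and it is where I expect the main obstacle to lie: one must be careful that the $p(\word\mid\bhbot,\bhpar)$ term and the decomposition $\bh = \bhbot + \bhpar$ interact correctly with the independent resampling, and that marginalizing $\pdo$ really does reproduce $\qunigram(\word,\concept,\bhpar,\bhbot)$ rather than some other distribution.

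Once that identification is in hand, the four conclusions follow quickly. For \textbf{erasure}, since $\rvHerase$ is a deterministic function of $\rvbX$ alone and $\rvC$ has been cut off from $\rvbX$, we get $\rvC \perp \rvHerase$ under $\pdo$, hence $\MIq(\rvC; \rvHerase) = 0 < \varepsilon$ for every $\varepsilon > 0$; this is where the ``for every $\varepsilon>0$'' phrasing comes from — the quantities are not just small but exactly zero. For \textbf{encapsulation}, I would use the additive decomposition $\MIq(\rvC;\rvH) = \MIq(\rvC;\rvHerase) + \MIq(\rvC;\rvHconcept)$ proved in \cref{app:pleasantdecomposition}; combined with $\MIq(\rvC;\rvHerase)=0$ this gives $\MIq(\rvC;\rvH) - \MIq(\rvC;\rvHconcept) = 0 < \varepsilon$. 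For \textbf{containment}, the point is that, conditioned on $\rvC = \concept$, the only channel from $\rvHconcept$ to $\rvX$ is through $\rvH = \bhbot + \bhpar$ where $\bhbot$ is resampled independently of everything given $\concept$; I would show $\rvX \perp \rvHconcept \mid \rvC$ by noting that given $\concept$, the next-word distribution averages over the independent $\bhbot$ draw in a way that does not depend on the realized $\bhpar$ — or, more cleanly, argue $\rvHconcept$ is itself a deterministic function of $\rvC$ under $\pdo$ up to the independent noise, so the conditional mutual information vanishes. For \textbf{stability}, similarly $\MIq(\rvX;\rvH\mid\rvC) - \MIq(\rvX;\rvHerase\mid\rvC) = \MIq(\rvX;\rvHconcept\mid\rvHerase,\rvC)$ by the chain rule, and this is zero because, given $\rvHerase$ and $\rvC$, the word $\rvX$ carries no further dependence on $\rvHconcept$.

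So the skeleton is: (i) rewrite $\pdo$ and identify it with $\qunigram$; (ii) extract the two independence statements $\rvC \perp \rvHerase$ and $\rvX \perp \rvHconcept \mid \rvC$ (plus the $\rvHerase \perp \rvHconcept$ fact already baked into $\qunigram$); (iii) read off each of the four $\varepsilon$-properties, invoking the additive decomposition and the chain rule / data-processing inequality where needed. I would present step (i) carefully with the explicit $\pdo$ factorization and the two marginal identities, treat steps (ii) and (iii) as short lemmas, and flag that everything hinges on the do-intervention removing the $\rvbX \to \rvC$ edge so that the only remaining coupling between the concept and the erased subspace is the spurious one that the counterfactual construction was designed to kill. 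The genuinely delicate part, as noted, is verifying that $\pdo$ marginalizes to $\qunigram$ exactly — once that is nailed down, the rest is bookkeeping with standard information inequalities.
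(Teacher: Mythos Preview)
Your skeleton in steps (ii)--(iii) matches the paper's approach closely, but step (i) contains a real error and one of your four sub-arguments is circular.

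\textbf{Step (i) is both wrong and unnecessary.} You claim $\pdo(\bhpar \mid \concept) = \punigram(\bhpar)$ and then, in the same sentence, describe it as ``the $\boldg \sim p(\cdot\mid\concept)$ draw'' --- these contradict each other. The do-intervention on $\rvC$ severs $\rvbX \to \rvC$ but \emph{not} $\rvC \to \rvHconcept$, so $\bhpar$ still depends on $\concept$ under $\pdo$. The paper never identifies $\pdo$ with $\qunigram$; it simply simplifies $\pdo$ to $p(\word\mid\bhbot,\bhpar)\,p(\bhbot)\,\pdo(\bhpar\mid\concept)\,p(\concept)$ (using $p(\bhbot\mid\mathrm{do}(\rvC=\concept))=p(\bhbot)$, which is the part you got right) and computes the four mutual informations directly under this factorization. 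Drop step (i) entirely and work under $\pdo$.

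\textbf{The encapsulation argument is circular.} You propose to invoke the additive decomposition of \cref{app:pleasantdecomposition}, but that proposition \emph{assumes} $\eyeminusPk$ is an $\varepsilon$-encapsulator as a hypothesis, so you cannot use it to conclude encapsulation. The paper instead uses the chain rule directly: $\MI(\rvC;\rvH)-\MI(\rvC;\rvHconcept)=\MI(\rvC;\rvHerase\mid\rvHconcept)=\MI(\rvC;\rvHerase)=0$, where the second equality uses the independence of $\rvHerase$ and $\rvHconcept$ under $\pdo$. This is exactly the chain-rule trick you already use for stability; use it here too.

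\textbf{Containment.} Your first route (``averaging over $\bhbot$ in a way that does not depend on the realized $\bhpar$'') does not give $\rvX \perp \rvHconcept \mid \rvC$, since $p(\word\mid\bhbot,\bhpar)$ genuinely depends on $\bhpar$. The paper's argument uses your second route: it invokes that $\rvHconcept$ is deterministic given $\rvC$, which collapses $p(\word\mid\bhpar,\concept)$ to $p(\word\mid\concept)$ and forces $\MI(\rvX;\rvHconcept\mid\rvC)=0$. Commit to that.

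With those fixes --- compute under $\pdo$ directly, use the chain rule for encapsulation, and use determinism of $\rvHconcept$ given $\rvC$ for containment --- your plan coincides with the paper's proof.
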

\begin{proof}
See \cref{app:graph}.
\end{proof}
What \Cref{thm:graph} tells us is that the graph given in \cref{fig:causal-graph-b} is consistent with the technical elaboration in \cref{sec:functionalsubspace}. 
Specifically, it means that erasure, encapsulation, stability, and containment are all properties that we expect a causal distribution resulting from an intervention on a concept to have. 
The interventional distributions, hence, motivate our discussion on independent $p(\bhpar)$ and $p(\bhbot)$ in \cref{sec:counterfactual}.\looseness=-1

\subsection{Non-causal Controlled Generation} \label{sec:noncausalgen}

Controlled generation involving the manipulation of concepts is not a new problem. 
We contextualize our approach relative to \citeposs{yang-klein-2021-fudge} method.
They perform controlled generation as follows. 
First, they train a classifier to predict a concept value $\concept \in \concepts$ from the contextual representation $\bh$ of a language model.
Then, they perform controlled generation by conditioning on a concept value $\rvC = \concept$ and applying Bayes' rule as follows:
\begin{align}
p(&\word \mid \cxt, \rvC = \concept) \\ 
&\propto  p(\rvC = \concept \mid (\eyeminusPk)\cxtenc)\,p(\word \mid \cxt) \nonumber
\end{align}
We illustrate the causal structure implied by this approach in \cref{fig:causal-graph-d}.
We use $\bPk$ to relate this approach to our subspace formulation,\footnote{Thus, we assume that the classifier is restricted to looking at $\rvHconcept$ to make its prediction.} but \citet{yang-klein-2021-fudge} do not make use of concept subspaces.

A do-intervention on $\rvC$ has no effect on $\rvX$ with this causal structure, because there is no causal path from $\rvC$ to $\rvX$ in \cref{fig:causal-graph-d}. 
This is why the authors \emph{condition} on $\rvC$ instead.
In this sense, \citeposs{yang-klein-2021-fudge} and similar methods are not causal and cannot easily be extended to be so.
As discussed in \cref{sec:causal-con-gen}, our approach \emph{is} causal, but such an analysis may come at the price of a number of restricting assumptions that are not fully met in practice. 
In the next section, we explain how we go about testing these assumptions with data.

\section{Experimental Setup} \label{sec:experimentsetup}

In the remainder of the paper, we test our framework empirically. Specifically, we answer three questions. First, are we able to find a projection matrix $\bPk$ that meets our definitions in \cref{sec:functionalsubspace}, across multiple concepts and models? Second, can we use the resulting concept subspace to successfully control the model's generation behavior, as theorized in \cref{sec:causal}? Finally, how does our do-intervention compare to other concept-based explanation methods on the CEBaB benchmark? 
\looseness=-1

\subsection{Linguistic Concepts} \label{sec:expdetailsling}

\paragraph{Concepts and Models.} 

We perform our analysis on two linguistic concepts, \conceptf{verbal-number} in English with $\concepts = \{\conceptval{sg}, \conceptval{pl}, \conceptval{n/a}\}$ and \conceptf{grammatical-gender} in French with $\concepts = \{\conceptval{fem}, \conceptval{msc}, \conceptval{n/a}\}$. 
For each of these concepts, we study the representation spaces of autoregressive language models, namely GPT2 \citep{radford2019language} and Llama 2 \cite{touvron2023llama}.\footnote{We rely on the implementations in the transformers library \citep{wolf-etal-2020-transformers}, namely: \modelnamef{gpt2-large} and \modelnamef{meta-llama/Llama-2-7b-hf} for \conceptf{verbal-number} and \modelnamef{gpt2-base-french} for \conceptf{grammatical-gender}.} 

\paragraph{Data.} 
For \conceptf{verbal-number} in English, we use \citeposs{linzen2016assessing} number agreement dataset. 
This dataset consists of sentences from Wikipedia that contain a \conceptval{sg} or \conceptval{pl} verb with the \textbf{fact} (ground truth verb) and the \textbf{foil} (inflected form of the fact to have opposite concept value). 
For \conceptf{grammatical-gender} in French, we rely on three treebanks from Universal Dependencies \citep{nivre-etal-2020-universal}: French GSD \citep{guillaumegsd}, ParTUT \citep{Sanguinetti2015, sanguinettipisa2014, boscotub2014}, and Rhapsodie \citep{lacheretrhapsodie2014}. 
We replicate the preprocessing steps of \citet{linzen2016assessing} on each of these datasets, i.e., we filter sentences to those containing \conceptval{fem} or \conceptval{msc} nouns with an associated adjective, and we obtain the foil by inflecting the \conceptf{grammatical-gender} of this adjective.

\subsection{CEBaB Benchmark}

The CEBaB \citep{abraham-etal-2022-cebab} benchmark dataset consists of original restaurant reviews in English, annotated for their overall sentiment on a 5-star scale, as well as aspect-level sentiment labels on four concepts, \conceptf{ambiance}, \conceptf{food}, \conceptf{noise}, and \conceptf{service}, with concept values $\concepts = \{\conceptval{pos}, \conceptval{neg}, \conceptval{n/a}\}$. The dataset includes human-written counterfactual reviews, where the text of the original review is altered to create pairs of reviews that differ according to a single aspect-level concept. 
We use this dataset in two experiments requiring different setups. In this section, we outline how we adapt the dataset to be able to learn $\bPk$ and compute our information-theoretic framework on the four CEBaB concepts. Later, in \cref{sec:cebabresults}, we also test our do-intervention on the CEBaB benchmark. For that experiment, we refer the reader to \citet{abraham-etal-2022-cebab} for experiment details, since we simply plugged our do-intervention into their pipeline.

\paragraph{Data and Models.} For CEBaB aspect-level sentiment concepts, we use the pre-processing steps in the CEBaB repository\footnote{\modelnamef{https://github.com/CEBaBing/CEBaB}}, yielding train-exclusive, dev and test data splits. 
To adapt CEBaB to an autoregressive language modeling setting, we append a concept-specific prompt at the end of the restaurant review, e.g., \wordf{... The food was}.
This prompt induces the model into predicting either a \conceptval{pos} or \conceptval{neg} adjective depending on the contents of the review with respect to that concept.\footnote{Initially, we learned $\bPk$ without the prompt, using the last hidden state of the review to replicate \citeposs{abraham-etal-2022-cebab} training process for $\bPk$ using INLP \citep{ravfogel-etal-2020-null}. 
Poor erasure performance led us to use prompts for both learning and evaluation.}
We randomly sample one of five concept-related prompts for each review and append it to the end of the review text (see \Cref{tab:app-cebab-prompts} in \Cref{app:conceptwordsprompts} for prompts).
Then, we take the last hidden state of the prompt to learn $\bPk$ and compute our counterfactual mutual information.
This prompt-based approach enables the use of our framework for a wide range of complex concepts.
Lastly, we study these four concepts using GPT2 \citep{radford2019language} and Llama 2 \cite{touvron2023llama}, using the same transformers implementations as for \conceptf{verbal-number} \citep{wolf-etal-2020-transformers}.

\begin{table*}
    \centering
    \small  
    \resizebox{0.99\textwidth}{!}{%
    \begin{tabular}{l l C{0.1\textwidth} C{0.1\textwidth} C{0.1\textwidth} C{0.1\textwidth} C{0.1\textwidth} C{0.1\textwidth} C{0.1\textwidth} C{0.1\textwidth}}
         \toprule
         & & & \multicolumn{4}{c}{Concept Information} & \multicolumn{3}{c}{Non-Concept Information}  \\ \cmidrule(lr){4-7} \cmidrule(lr){8-10} 
         Concept & Model &  $\MI(\rvC; \rvH)$ & Erasure   ($\uparrow$) & Corr. Erasure   ($\uparrow$) & Encaps.   ($\uparrow$) & Reconst.   ($\uparrow$) &  {\scriptsize $\MI(\rvX\!;\! \rvH\! \mid\! \rvC)$} & Containment   ($\uparrow$) & Stability   ($\uparrow$) \\ \toprule
            \conceptf{number} & \modelnamef{gpt2-large} & 0.50\pms{0.04} & 0.78\pms{0.04} & 0.69\pms{0.04} & 0.52\pms{0.03} & 0.74\pms{0.04} & 1.02\pms{0.15} & 0.87\pms{0.02} & 1.00\pms{0.02} \\
            \conceptf{number} & \modelnamef{llama2} & 0.49\pms{0.06} & 0.76\pms{0.05} & 0.75\pms{0.04} & 0.55\pms{0.05} & 0.78\pms{0.08} & 1.07\pms{0.07} & 0.78\pms{0.01} & 1.05\pms{0.02} \\
            \midrule
            \conceptf{gender} & \modelnamef{gpt2-base\textsubscript{fr}} & 0.46\pms{0.04} & 0.57\pms{0.04} & 0.49\pms{0.08} & 0.34\pms{0.03} & 0.77\pms{0.03} & 2.02\pms{0.13} & 0.93\pms{0.01} & 0.95\pms{0.01} \\
            \midrule
            \conceptf{ambiance} & \modelnamef{gpt2-large} & 0.25\pms{0.02} & 0.84\pms{0.04} & 0.97\pms{0.01} & 0.09\pms{0.02} & 0.25\pms{0.04} & 0.41\pms{0.10} & 0.77\pms{0.07} & 0.76\pms{0.13} \\
            \conceptf{ambiance} & \modelnamef{llama2} & 0.35\pms{0.03} & 0.39\pms{0.04} & 0.18\pms{0.05} & 0.46\pms{0.03} & 1.07\pms{0.06} & 0.58\pms{0.13} & 0.72\pms{0.06} & 0.84\pms{0.07} \\
            \conceptf{food} & \modelnamef{gpt2-large} & 0.28\pms{0.02} & 0.74\pms{0.04} & 0.27\pms{0.12} & 0.59\pms{0.04} & 0.85\pms{0.06} & 0.44\pms{0.09} & 0.81\pms{0.03} & 0.60\pms{0.09} \\
            \conceptf{food} & \modelnamef{llama2} & 0.41\pms{0.03} & 0.76\pms{0.03} & 0.68\pms{0.03} & 0.97\pms{0.07} & 1.21\pms{0.06} & 0.66\pms{0.07} & 0.73\pms{0.03} & 0.81\pms{0.05} \\
            \conceptf{noise} & \modelnamef{gpt2-large} & 0.21\pms{0.01} & 0.78\pms{0.01} & 0.60\pms{0.03} & 0.35\pms{0.05} & 0.57\pms{0.05} & 0.31\pms{0.11} & 0.71\pms{0.14} & 0.82\pms{0.10} \\
            \conceptf{noise} & \modelnamef{llama2} & 0.28\pms{0.03} & 0.29\pms{0.08} & 0.21\pms{0.10} & 0.43\pms{0.08} & 1.15\pms{0.05} & 0.60\pms{0.10} & 0.73\pms{0.05} & 0.73\pms{0.09} \\
            \conceptf{service} & \modelnamef{gpt2-large} & 0.35\pms{0.03} & 0.65\pms{0.03} & 0.40\pms{0.11} & 0.44\pms{0.05} & 0.79\pms{0.05} & 0.39\pms{0.11} & 0.74\pms{0.08} & 0.74\pms{0.14} \\
            \conceptf{service} & \modelnamef{llama2} & 0.39\pms{0.02} & 0.51\pms{0.04} & 0.25\pms{0.05} & 0.86\pms{0.07} & 1.35\pms{0.09} & 0.53\pms{0.13} & 0.68\pms{0.08} & 0.86\pms{0.05} \\
             \bottomrule
    \end{tabular}%
    }
    \caption{Information-theoretic evaluation results. The first set of columns quantifies concept information in concept and non-concept subspaces, with total concept information $\MI(\rvC; \rvH)$ alongside the \textit{Correlational Erasure Ratio}, and the \textit{Counterfactual Erasure}, \textit{Encapsulation} and \textit{Reconstructed} ratios. The second set of columns describes the preservation of non-concept information, with total non-concept information $\MI(\rvX; \rvH \mid \rvC)$ alongside the \textit{Containment Ratio} and \textit{Stability Ratio}. We expect values of the ratios to fall within $[0,1]$, where higher is better. Each entry shows standard deviation over random restarts. The first set of rows shows linguistic concepts \conceptf{verbal-number} and \conceptf{grammatical-gender}, split by model. The second set of rows shows CEBaB aspect-level sentiment concepts \conceptf{ambiance}, \conceptf{food}, \conceptf{noise}, and \conceptf{service}, with $\bPk$ trained on the CEBaB dataset.}
    \label{tab:resultsmain}
    \vspace{-0pt}
\end{table*}

\subsection{Shared Experimental Setup} \label{sec:sharedsetup}

\paragraph{Concept Definition.} In \cref{sec:concepts}, we defined our context-dependent distribution $\iota$ as a means of relating language models and concepts. 
In practice, we drop the context-dependent aspect and define a concept via a list of words for each concept value. 
For linguistic concepts, we construct our lists of words by using SpaCy \citep{spacy} to tag the French and English Wikipedia corpora \citep{wikidump}, respectively. For \conceptf{verbal-number}, we use the tagged English words to obtain lists of third person present \conceptval{sg} and \conceptval{pl} verbs, which we then align to obtain matching pairs, e.g., (\wordf{walks}, \wordf{walk}). 
The process is the same for \conceptf{grammatical-gender} in French, leading to gendered pairs of adjectives, e.g., (\wordf{français}, \wordf{française}). 
For CEBaB concepts, we prompt ChatGPT \cite{chatgpt} to create lists of \conceptval{pos} and \conceptval{neg} adjectives relating to each concept. In this case, the lists do not lend themselves to pairing, but include equal numbers of words for each concept (see \Cref{tab:app-word-list-linguistic} for word lists). 

\paragraph{Computing word probabilities.}
A word can be tokenized to one or more tokens.
We assume that the tokenization for each word is unique. 
When computing the probability of a multi-token word while intervening on the representation space, e.g., $\qunigram(\word | \bhbot)$, we apply concept erasure only when computing the probability of the first token in the word. 
Subsequent token probabilities are obtained by recomputing $\bh$ without intervention.
We then sum log probabilities of each successive token, as well as the log probability that the token after the word is either the beginning of a new word, punctuation, or the $\eos$ token.
\looseness=-1


\paragraph{Finding the Concept Subspace.} 
We find $\bPk$ using LEACE \citep{belrose2023leace}, the state-of-the-art method for linear concept erasure. 
LEACE maximizes a cross-entropy loss on samples from $\ptildeunigram$ with respect to $\bPk$, which constitutes a lower bound on the \emph{correlational} $\MI(\rvC; \rvHerase)$. 
Results are reported for three $\bPk$ estimates obtained from randomized train, test splits for each concept, and three random restarts of the experiment for each $\bPk$.
\looseness=-1

\paragraph{Estimating $\punigram(\bh)$, $\punigram(\bhbot)$, and $\punigram(\bhpar)$.} For each model, we generate strings by repeatedly sampling from the language model using ancestral and nucleus sampling, starting with $\bos$, until the model's context size limit is reached or $\eos$ is sampled. 
At each sampling step, we save the pair $(\word_t, \cxtenc, \concept_t)$, where $\concept_t$ is the concept label of the sample. 
We then compute sums over $\repspace$ by randomly sampling from our dataset of generated $\cxtenc$ assuming a uniform distribution. 
We apply $\bPk$ to get $\bhbot$ and $\bhpar$.
\looseness=-1

\paragraph{The concept value $\conceptval{n/a}$.} 
We exclude $\conceptval{n/a}$ from our concept set when learning $\bPk$ and when measuring concept information. 
We are most interested in the behavior of the model when generating non-\conceptval{n/a} words. 
In the overwhelming majority of textual contexts, \conceptval{n/a} is far more likely and erasure is trivial.
As such, our train sets for LEACE and test sets for computing our metrics contain only non-\conceptval{n/a} context strings. 
This means $\bPk$ only learns to erase the distinction between non-\conceptval{n/a} concept values, so we again exclude \conceptval{n/a} when computing concept information metrics of erasure, encapsulation, and partition reconstruction defined in \cref{sec:erasure}.
We can, however, include \conceptval{n/a} words for non-concept information metrics of containment and stability defined in \cref{sec:containstab}. 
We estimate this by randomly sampling and computing the probabilities of $3$k non-concept words in the concept language with each random restart of our evaluation pipeline.
\looseness=-1


\section{Results} \label{sec:results}

\subsection{Partitioning of Concept Information} \label{sec:partitionresults}

In \Cref{tab:resultsmain}, we test empirically whether LEACE \citep{belrose2023leace} yields a $\bPk$ that performs well according to our counterfactual information-theoretic framework defined in \cref{sec:functionalsubspace}.
To facilitate the interpretation of our results, we reformulate our definitions into ratios with values between 0 and 1, such that a higher value is better:
\vspace{1pt}
\begin{center} 
 \small
    \begin{tabular}{ll} \toprule
    Ratio & Definition \\ \midrule
    Erasure & $1 - \frac{\MIq(\rvC; \rvHerase)}{\MI(\rvC; \rvH)}$ \\[6pt]
    Corr. Erasure & $1 - \frac{\MI(\rvC; \rvHerase)}{\MI(\rvC; \rvH)}$ \\[6pt]
    Encapsulation & $\frac{\MIq(\rvC; \rvHconcept)}{\MI(\rvC; \rvH)}$ \\[6pt]
    Reconstructed & $\frac{\MIq(\rvC; \rvHconcept) + \MIq(\rvC; \rvHerase)}{\MI(\rvC; \rvH)}$ \\[6pt]
    Containment & $1 - \frac{\MIq(\rvX; \rvHconcept | \rvC)}{\MI(\rvX; \rvH | \rvC)}$ \\[6pt]
    Stability & $\frac{\MIq(\rvX; \rvHerase | \rvC)}{\MI(\rvX; \rvH | \rvC)}$  \\ \bottomrule
    \end{tabular}
\end{center}\vspace{3pt}



\Cref{tab:resultsmain} shows that LEACE can yield an adequate representation space partitioning according to our framework, but does not drive the counterfactual erasure and encapsulation ratios to 1. 
The mean erasure ratio across all datasets and models is $0.64$ and the encapsulation ratio is $0.50$.
This indicates that (a) the non-concept representation components $\rvHerase$ still contain some information about the concept and (b) the concept representation components $\rvHconcept$ cannot fully determine the concept $\concept$. 
Empirically, it is difficult to determine whether these findings are due to the LEACE objective falling victim to spurious correlations, as described in \cref{sec:example}, or to the concept encoding being non-linear.
For CEBaB concepts, the correlational erasure ratio is typically far lower than the counterfactual one. 
This suggests that while spuriously correlated information remains in $\rvHerase$, the actual concept direction has been identified more accurately than a simple correlational analysis would imply.\looseness=-1

A containment ratio less than $1$ indicates that $\rvHconcept$ generally contains some non-concept information, with a mean value of $0.77$. 
Since all concepts are binary after dropping \conceptval{n/a}, the concept subspaces are one-dimensional. 
The amount of non-concept information contained in a one-dimensional subspace suggests that the concept features are highly correlated with non-concept features.
The stability ratio has a mean of $0.83$, confirming that LEACE does some damage to non-concept information in $\rvHerase$. We note, however, that for our simple linguistic concepts, this damage is minimal. \looseness=-1

\begin{figure}[!ht]
    \centering
    \includegraphics[width=\columnwidth,trim={0mm 2mm 0 2mm},clip]{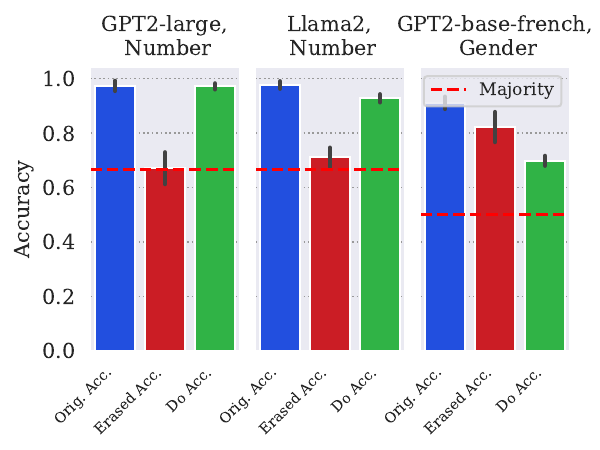}
    \caption{Controlled generation experiment. 
    Reported values are computed on (context, fact, foil) samples from the test split of our curated datasets of natural text used to train LEACE.
    \textit{Orig.\,\,Acc} refers to the accuracy with which the model chooses fact over foil using original representations. 
    \textit{Erased Acc.} is the accuracy after erasure, using our counterfactual $\qunigram(\word | \bhbot)$ distribution. 
    \textit{Do Acc.} measures, for example for Do(\rvC=\conceptvalfootnote{sg}) (see \cref{eq:do-intervention}), the rate at which the intervention induces the model to assign higher probability to the \conceptvalfootnote{sg} element of the (fact, foil) pair over its \conceptvalfootnote{pl} counterpart, reported on aggregate over \conceptvalfootnote{sg} and \conceptvalfootnote{pl} contexts in the test set.}
    \label{fig:resultscontrol}
     \vspace*{-0pt}
\end{figure}

Lastly, we attribute the failure to learn a concept partition for \conceptf{grammatical-gender} to limitations of the model itself.
Compared to English, the best available French \modelnamef{gpt2} model is trained on less data and has fewer parameters.
In our preliminary experiments, we noticed that smaller English \modelnamef{gpt2} models for \conceptf{verbal-number} were also notably worse than \modelnamef{gpt2-large}.

\begin{table*}[ht!]
    \centering
    \small  
    \resizebox{\textwidth}{!}{%
    \begin{tabular}{l l C{0.09\textwidth} C{0.09\textwidth} C{0.09\textwidth} C{0.09\textwidth} C{0.09\textwidth} C{0.09\textwidth} C{0.09\textwidth} C{0.09\textwidth} C{0.09\textwidth}}
         \toprule
         & & \multicolumn{3}{c}{Binary Overall Sentiment} & \multicolumn{3}{c}{3-way Overall Sentiment} & \multicolumn{3}{c}{5-way Overall Sentiment} \\ 
         \cmidrule(lr){3-5} \cmidrule(lr){6-8} \cmidrule(lr){9-11}
         Model & Method & ICaCE-cosine ($\downarrow$) & ICaCE-L2 ($\downarrow$) & ICaCE-normdiff($\downarrow$) & ICaCE-cosine ($\downarrow$) & ICaCE-L2 ($\downarrow$) & ICaCE-normdiff($\downarrow$) & ICaCE-cosine ($\downarrow$) & ICaCE-L2 ($\downarrow$) & ICaCE-normdiff($\downarrow$)\\
         \midrule
            \modelnamef{bert-base-uncased} & Best CEBaB & \textbf{0.64\pms{0.05}} & \textbf{0.31\pms{0.00}} & \textbf{0.30\pms{0.00}} & \textbf{0.54\pms{0.04}} & \textbf{0.56\pms{0.00}} & \textbf{0.48\pms{0.00}} & 0.63\pms{0.01} & \textbf{0.74\pms{0.02}} & \textbf{0.54\pms{0.02}} \\
            \modelnamef{bert-base-uncased} & INLP & 0.79\pms{0.00} & 0.52\pms{0.05} & 0.52\pms{0.05} & 0.70\pms{0.01} & 0.58\pms{0.02} & 0.55\pms{0.01} & 0.60\pms{0.02} & 0.80\pms{0.02} & 0.72\pms{0.03} \\
            \modelnamef{bert-base-uncased} & Do-INLP & 0.78\pms{0.01} & 0.55\pms{0.01} & 0.55\pms{0.01} & 0.68\pms{0.01} & 0.62\pms{0.04} & 0.53\pms{0.02} & \textbf{0.54\pms{0.01}} & 0.77\pms{0.02} & 0.61\pms{0.03} \\
            \modelnamef{bert-base-uncased} & LEACE & 0.77\pms{0.01} & 0.36\pms{0.01} & 0.35\pms{0.01} & 0.78\pms{0.03} & 0.57\pms{0.01} & 0.53\pms{0.01} & 0.77\pms{0.03} & 0.80\pms{0.02} & 0.71\pms{0.02} \\
            \modelnamef{bert-base-uncased} & Do-LEACE & 0.75\pms{0.01} & 0.37\pms{0.01} & 0.37\pms{0.01} & 0.69\pms{0.02} & 0.57\pms{0.01} & 0.49\pms{0.01} & 0.58\pms{0.02} & 0.75\pms{0.02} & 0.61\pms{0.03} \\
        \midrule
            \modelnamef{gpt2} & Best CEBaB & \textbf{0.58\pms{0.01}} & 0.29\pms{0.00} & 0.29\pms{0.00} & \textbf{0.50\pms{0.01}} & \textbf{0.52\pms{0.01}} & \textbf{0.42\pms{0.01}} & \textbf{0.59\pms{0.01}} & \textbf{0.60\pms{0.02}} & \textbf{0.40\pms{0.01}} \\
            \modelnamef{gpt2} & INLP & 1.00\pms{0.00} & 0.49\pms{0.04} & 0.44\pms{0.03} & 1.00\pms{0.00} & 0.65\pms{0.04} & 0.52\pms{0.01} & 1.00\pms{0.00} & 0.72\pms{0.02} & 0.58\pms{0.02} \\
            \modelnamef{gpt2} & Do-INLP & 0.98\pms{0.02} & 0.30\pms{0.01} & 0.30\pms{0.00} & 0.99\pms{0.02} & 0.53\pms{0.01} & 0.51\pms{0.01} & 0.99\pms{0.01} & 0.68\pms{0.02} & 0.66\pms{0.02} \\
            \modelnamef{gpt2} & LEACE & 1.00\pms{0.00} & 0.31\pms{0.01} & 0.30\pms{0.00} & 1.00\pms{0.00} & 0.53\pms{0.01} & 0.51\pms{0.01} & 1.00\pms{0.00} & 0.68\pms{0.02} & 0.66\pms{0.02} \\
            \modelnamef{gpt2} & Do-LEACE & 0.99\pms{0.06} & \textbf{0.28\pms{0.00}} & \textbf{0.28\pms{0.00}} & 0.99\pms{0.04} & \textbf{0.52\pms{0.01}} & 0.51\pms{0.01} & 0.99\pms{0.02} & 0.68\pms{0.02} & 0.67\pms{0.02} \\
        \midrule
            \modelnamef{roberta-base} & Best CEBaB & \textbf{0.70\pms{0.03}} & \textbf{0.29\pms{0.01}} & \textbf{0.29\pms{0.01}} & \textbf{0.62\pms{0.02}} & 0.55\pms{0.01} & 0.48\pms{0.00} & 0.64\pms{0.01} & \textbf{0.78\pms{0.01}} & \textbf{0.59\pms{0.01}} \\
            \modelnamef{roberta-base} & INLP & 0.80\pms{0.00} & 0.32\pms{0.02} & 0.32\pms{0.02} & 0.72\pms{0.01} & 0.55\pms{0.00} & 0.54\pms{0.01} & 0.59\pms{0.01} & 0.84\pms{0.01} & 0.81\pms{0.01} \\
            \modelnamef{roberta-base} & Do-INLP & 0.79\pms{0.00} & 0.30\pms{0.06} & 0.30\pms{0.06} & 0.70\pms{0.01} & \textbf{0.52\pms{0.01}} & 0.47\pms{0.02} & \textbf{0.56\pms{0.01}} & 0.80\pms{0.00} & 0.72\pms{0.01} \\
            \modelnamef{roberta-base} & LEACE & 0.82\pms{0.01} & 0.31\pms{0.01} & 0.31\pms{0.01} & 0.83\pms{0.01} & 0.54\pms{0.01} & 0.51\pms{0.01} & 0.83\pms{0.01} & 0.83\pms{0.01} & 0.80\pms{0.01} \\
            \modelnamef{roberta-base} & Do-LEACE & 0.77\pms{0.01} & 0.31\pms{0.01} & 0.31\pms{0.01} & 0.70\pms{0.01} & 0.53\pms{0.01} & \textbf{0.47\pms{0.01}} & 0.59\pms{0.01} & 0.81\pms{0.01} & 0.76\pms{0.01} \\
         \bottomrule
    \end{tabular}%
    }
    \caption{CEBaB benchmark results for our causal do-intervention. In three sets of columns, we report empirical Individual Causal Concept Effect (ICaCE) losses for 2-, 3- and 5-class overall sentiment prediction. Three types of losses are reported, lower is better for all three. \textit{ICaCE-cosine} indicates whether the estimated and observed effect have the same direction, without accounting for magnitude. \textit{ICaCE-L2} compares the Euclidean norm of the difference of the estimated and observed effect, and therefore reflects both magnitude and direction differences. \textit{ICaCE-normdiff} is the absolute difference between the Euclidean norms of the observed and estimated effects, thereby honing in on magnitude differences. Each entry shows mean $\pm$ standard deviation over random restarts. We report results for the best performing method from CEBaB \citep{abraham-etal-2022-cebab}, INLP \citep{ravfogel-etal-2020-null}, and LEACE \citep{belrose2023leace}. We apply our causal do-intervention from \cref{sec:causal-con-gen}, this time in a classification setting, using $\bPk$ returned by INLP and LEACE. See \cref{sec:cebabresults} for discussion of these results.}
    \label{tab:cebabresults}
\end{table*}

\subsection{Causal Controlled Generation} \label{sec:controlresults}

In \cref{sec:causal}, we argued for a causal structure for language generation that allows us to intervene on the concept-valued random variable $\rvC$. 
We now test this causal model empirically by computing the do-intervention in \cref{eq:do-intervention}.
We define success for the intervention using the forced-choice setup shown in \cref{ex:gramm,ex:ungramm}. 
For example, given a context with a \conceptval{sg} fact, we consider $\causaldo(\rvC = \conceptval{pl})$ successful if $p(\word \mid \rvHerase = \bhbot, \causaldo(\rvC = \conceptval{pl}))$ assigns higher probability to the \conceptf{pl} foil over \conceptf{sg} fact.

Results for this experiment are shown in \Cref{fig:resultscontrol}.
For context, we report the model's accuracy in the forced-choice setup before (\textit{Orig. Acc.}) and after (\textit{Erased Acc.}) erasure. 
We note the consistency of results between information-theoretic metrics in \Cref{tab:resultsmain} and post-erasure accuracy in \cref{fig:resultscontrol}---the erasure intervention successfully lowers the accuracy of the minority class \conceptval{pl} for \conceptf{verbal-number}, however the intervention fails to significantly reduce accuracy for \conceptf{grammatical-gender}.

With this context in mind, the do-intervention is remarkably successful for \conceptf{verbal-number} across two models. 
By acting solely in our concept subspace, we are able to almost match the original accuracy for both models. 
Results for \texttt{gpt2-base-french} are much worse, since the do-intervention actually has lower accuracy than erasure.
Viewed together with results in \cref{sec:partitionresults}, this confirms that our causal structure only holds given an adequate $\bPk$ under our counterfactual framework. 
Nonetheless, the success of the do-intervention on \conceptf{verbal-number} despite an imperfect partitioning of concept information suggests that identifying the causal concept direction is not a necessary requirement for causal concept-based controlled generation, so long as $\rvHconcept$ contains a significant share of concept information.\looseness=-1

\subsection{CEBaB Benchmark Results} \label{sec:cebabresults}

As explained in \cref{sec:related-work}, our work joins an ongoing conversation about whether interpretability methods such as linear concept erasure, which operate on low-level neural representations, are good estimators for a high-level causal narrative, e.g., the process by which a language model generates grammatical text \cite{geiger2023causal}.
The CEBaB benchmark \citep{abraham-etal-2022-cebab} evaluates concept-based explanation methods as causal effect estimators by comparing the change in a sentiment classifier's predictions resulting, e.g., from concept erasure, against a ground truth. 
This ground truth is the change in the classifier's predictions from changing the concept value in the input text, without modifying other aspects of the restaurant review.\looseness=-1

In \Cref{tab:cebabresults}, we report CEBaB results for INLP \citep{ravfogel-etal-2020-null} and LEACE \citep{belrose2023leace}, our do-intervention with the projection matrices returned by these methods, and the best performing score from other methods in CEBaB.
Results show that our do-intervention improves the performance of INLP and LEACE on CEBaB.
For LEACE, our do-intervention improves the directional (cosine) loss, but not the magnitude losses. 
The inverse is true for INLP.
Across models and tasks, our method occasionally beats other methods on CEBaB. 
\looseness=-1

\section{Related Work} \label{sec:related-work}

The use of concept subspaces for estimating causal effects and performing causal interventions is a well-studied area. 
\citet{feder-etal-2021-causalm} introduce a method for estimating the causal effect of a concept encoding on a language model's predictions, in a manner that accounts for confounders. 
They use adversarial training to obtain counterfactual representations with a different concept value, without changing the textual context.
The CEBaB benchmark \citep{abraham-etal-2022-cebab} builds on this work by providing a dataset of textual counterfactuals, enabling the comparison of different concept intervention methods.
\citet{arora-etal-2024-causalgym} also provide textual counterfactuals to benchmark a much larger set of linguistics tasks.
\citet{elazar-etal-2021-amnesic} and \citet{lasri-etal-2022-probing} measure changes in model accuracy on a concept-related task as a test of whether a model uses a linear concept encoding. 
\citet{jacovi-etal-2021-contrastive}, \citet{ravfogel-etal-2021-counterfactual}, \citet{park2023linear}, \citet{geiger2024das}, and \citet{wuandarora2024reft} create concept counterfactual representations using linear projection.
\citet{geiger2021causalabs, geiger2023causal} propose a theoretical framework relating low-level interpretability methods such as linear concept erasure to high level causal phenomena such as text generation.\looseness=-1

Previous work in linear concept erasure is also interested in measuring the degree to which erasure preserves non-concept-related features. 
\citet{ravfogel-etal-2020-null, ravfogel2022linear, ravfogelkernel2022} perform various tests, e.g., evaluating whether the semantics of the representation space were affected by erasure using SimLex-999 \citep{hill2015-simlex}, which is different from whether the language model's predictions have changed. 
\citet{elazar-etal-2021-amnesic} assess damage to $\plm$ via two tests: First, they test the model's ability to recover task performance after finetuning, and second, they report the overall KL divergence in the LM's output distribution, over the entire vocabulary. 
This last approach was a source of inspiration for our work, which delves much deeper into this distributional distance idea via our stability and containment tests.\looseness=-1


\section{Conclusion}
In this paper, we set out to define an \emph{intrinsic} measure of information in a subspace of a language model's representation space. 
In light of the correlational failure mode of linear concept erasure methods \cite{kumar2022probing}, doing so requires a counterfactual approach:
By assuming statistical independence between the components of a representation in the concept subspace and its orthogonal complement, we are able to correctly measure information in a subspace by marginalizing out the remainder of the space.
To the extent that a causal concept subspace exists for a particular concept and model, erasure under this metric is optimized by that subspace.
In practice, we did not actually optimize this metric, because it is computationally intractable due to nested sums over the infinite representations space $\repspace$. 
We leave the development of a tractable approximation to future work.
Our theoretical analysis, combined with the efficacy of linear erasure methods using a correlational objective, suggests a tantalizing prospect: That a counterfactual objective could identify a one-dimensional causal subspace containing \emph{all} information about the concept empirically.\looseness=-1


\bibliography{custom}
\bibliographystyle{acl_natbib}

\clearpage
\onecolumn

\appendix

\section{Decomposing $\MIq(\rvC; \rvH)$}\label{app:pleasantdecomposition}

\begin{restatable}{proposition}{pleasantdecomposition} \label{prop:pleasantdecomposition}
Suppose $\bPk$ is a $\varepsilon$-eraser and $(\eyeminusPk)$ is a $\varepsilon$-encapsulator of $\concepts$ with respect to $\repspace$.
Then, as $\varepsilon \rightarrow 0$, the following holds
\begin{equation} \label{eq:pleasantdecomposition}
\MIq(\rvC; \rvH) = \MIq(\rvC; \rvHerase) + \MIq(\rvC; \rvHconcept)
\end{equation}
\end{restatable}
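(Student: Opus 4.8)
The plan is to show that in the limit $\varepsilon \to 0$ the three-variable joint structure under $\qunigram$ forces $\rvHerase$ and $\rvHconcept$ to be conditionally independent given $\rvC$, and that this, together with near-perfect erasure and encapsulation, collapses the chain-rule expansion of $\MIq(\rvC; \rvH)$ into the claimed sum. First I would write the chain rule for mutual information on the pair $(\rvHerase, \rvHconcept)$, which together determine $\rvH$: $\MIq(\rvC; \rvH) = \MIq(\rvC; \rvHerase, \rvHconcept) = \MIq(\rvC; \rvHerase) + \MIq(\rvC; \rvHconcept \mid \rvHerase)$. So it suffices to prove $\MIq(\rvC; \rvHconcept \mid \rvHerase) \to \MIq(\rvC; \rvHconcept)$, i.e., that conditioning on $\rvHerase$ does not change how much $\rvHconcept$ tells us about $\rvC$.

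The key algebraic fact I would invoke is the construction of $\qunigram$ itself: by \cref{eq:counterfactualunigram} and the factorization $\qunigram(\bh) = \punigram(\bhbot)\punigram(\bhpar)$, the variable $\rvHerase$ is sampled independently of $(\rvC, \rvHconcept)$ in the generative story — the only coupling among $\bhbot$, $\bhpar$, $\concept$, $\word$ runs through $\word$ via $\plm(\word \mid \bhpar, \bhbot)$. Hence, before marginalizing out $\word$, we have $\rvHerase \perp (\rvC, \rvHconcept)$ under $\qunigram$, which in particular gives $\MIq(\rvC; \rvHerase) = 0$ and $\MIq(\rvHconcept; \rvHerase) = 0$ exactly — no $\varepsilon$ needed for the "ideal" $\qunigram$. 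But the statement is phrased with an $\varepsilon$-eraser and $\varepsilon$-encapsulator because the relevant $\qunigram$ here is the one induced by a given (imperfect) $\bPk$; so instead I would argue as follows. Using the chain rule the other way, $\MIq(\rvC;\rvHerase,\rvHconcept) = \MIq(\rvC;\rvHconcept) + \MIq(\rvC;\rvHerase \mid \rvHconcept)$, and also $\MIq(\rvC;\rvHerase,\rvHconcept) \ge \MIq(\rvC;\rvHconcept)$ and $\ge \MIq(\rvC;\rvHerase)$ by monotonicity (data-processing). The encapsulation hypothesis says $\MIq(\rvC;\rvH) - \MIq(\rvC;\rvHconcept) < \varepsilon$, so $\MIq(\rvC;\rvHerase \mid \rvHconcept) < \varepsilon$; and the erasure hypothesis says $\MIq(\rvC;\rvHerase) < \varepsilon$. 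Combining with the first chain-rule expansion $\MIq(\rvC;\rvH) = \MIq(\rvC;\rvHerase) + \MIq(\rvC;\rvHconcept \mid \rvHerase)$ gives $0 \le \MIq(\rvC;\rvHconcept \mid \rvHerase) - \MIq(\rvC;\rvHconcept) = \big(\MIq(\rvC;\rvH) - \MIq(\rvC;\rvHerase)\big) - \MIq(\rvC;\rvHconcept) \le \varepsilon$, using non-negativity and the two bounds. Letting $\varepsilon \to 0$ yields $\MIq(\rvC;\rvHconcept \mid \rvHerase) = \MIq(\rvC;\rvHconcept)$, hence $\MIq(\rvC;\rvH) = \MIq(\rvC;\rvHerase) + \MIq(\rvC;\rvHconcept)$.

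The step I expect to be the main obstacle is making precise that $\rvHerase$ and $\rvHconcept$ jointly recover $\rvH$ under $\qunigram$ — i.e., justifying $\MIq(\rvC;\rvH) = \MIq(\rvC;\rvHerase,\rvHconcept)$ — since $\rvH$ is a deterministic function of the pair $(\rvHerase,\rvHconcept) = (\bPk\rvH,(\eyeminusPk)\rvH)$ precisely because $\bPk$ is an orthogonal projection and the two subspaces are complementary; I would spell this out via $\bh = \bPk\bh + (\eyeminusPk)\bh$ and invoke the fact that mutual information is invariant under bijective reparameterization of the conditioning/observed variable. A secondary subtlety is checking that all the quantities in play are finite so the arithmetic with differences is legitimate; since $\concepts$ is finite, $\MIq(\rvC;\cdot) \le \log|\concepts| < \infty$, which takes care of that. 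Everything else is the mechanical chain rule and the data-processing inequality already cited in the paper.
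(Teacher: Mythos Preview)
Your argument is correct and ultimately sandwiches the same quantity the paper does, but it takes a longer route. The paper's proof never invokes the chain rule or the identification $\rvH \leftrightarrow (\rvHerase,\rvHconcept)$; it bounds the gap $\MIq(\rvC;\rvH) - \MIq(\rvC;\rvHconcept) - \MIq(\rvC;\rvHerase)$ directly: data-processing gives $\MIq(\rvC;\rvH) \ge \MIq(\rvC;\rvHconcept)$, so together with the eraser bound the gap exceeds $-\varepsilon$; encapsulation gives $\MIq(\rvC;\rvH) < \MIq(\rvC;\rvHconcept) + \varepsilon$, so together with non-negativity of $\MIq(\rvC;\rvHerase)$ the gap is below $\varepsilon$. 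Your detour through the conditional term $\MIq(\rvC;\rvHconcept \mid \rvHerase)$ is valid but not needed; what it buys you is a more explicit picture of \emph{why} the decomposition holds (the interaction term vanishes), whereas the paper's version is shorter and uses only the four ingredients already named in \cref{def:eraser,def:encapsulation}.

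Two small points worth tightening. First, the claim in your opening paragraph that $\rvHerase \perp (\rvC,\rvHconcept)$ under $\qunigram$ is too strong: only the marginal independence $\rvHerase \perp \rvHconcept$ is built into $\qunigram$, since $\rvC$ is determined through $\word$, which depends on both $\bhbot$ and $\bhpar$ via $\plm(\word \mid \bhpar,\bhbot)$; this is precisely why $\MIq(\rvC;\rvHerase)<\varepsilon$ is a genuine hypothesis rather than automatically zero. Second, the lower bound $0 \le \MIq(\rvC;\rvHconcept\mid\rvHerase) - \MIq(\rvC;\rvHconcept)$ does not follow from ``non-negativity and the two bounds'' alone. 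It \emph{is} true, via the interaction-information identity and the marginal independence $\rvHerase \perp \rvHconcept$ just mentioned, but you should say so; alternatively, replace $0$ by $-\varepsilon$ using data-processing plus the eraser bound, which is all you need for the limit and is effectively what the paper does.
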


\begin{proof}
On the left-hand side,
\begin{subequations}
\begin{align}  \MIq(\rvC; \rvH)  + \varepsilon &\ge \MIq(\rvC; \rvHconcept) + \varepsilon 
 & \justify{data-processing inequality} \\
&\ge \MIq(\rvC; \rvHconcept) + \MIq(\rvC; \rvHerase)  
 & \justify{$\bPk$ is an $\varepsilon$-eraser} \label{eq:pleasantdecomposition-lhs}
\end{align}
\end{subequations}
On the right-hand side,
\begin{subequations}
\begin{align}
\MIq(\rvC; \rvH) + \varepsilon &\le  \MIq(\rvC; \rvHconcept) +  2 \varepsilon&  \justify{$(\eyeminusPk)$ is an $\varepsilon$-encapsulator}\\
&\le \MIq(\rvC; \rvHconcept) + \MIq(\rvC; \rvHerase) + 2\varepsilon 
 & \justify{non-negativity of MI} \label{eq:pleasantdecomposition-rhs}
\end{align}
\end{subequations}
Combining \cref{eq:pleasantdecomposition-lhs} and \cref{eq:pleasantdecomposition-rhs}, we have
\begin{subequations} \label{eq:c-h-decomposition-sandwich}
\begin{align}
    \MIq(\rvC; \rvHconcept) + \MIq(\rvC; \rvHerase) &\le \MIq(\rvC; \rvH) + \varepsilon \\
    &\le \MIq(\rvC; \rvHconcept) + \MIq(\rvC; \rvHerase) + 2\varepsilon  
\end{align}
\end{subequations}
Taking $\varepsilon \to 0$ in \cref{eq:c-h-decomposition-sandwich}, we have \cref{eq:pleasantdecomposition}
\begin{subequations}
\begin{align}
     \MIq(\rvC; \rvH) = \MIq(\rvC; \rvHconcept) + \MIq(\rvC; \rvHerase) \nonumber
\end{align}
\end{subequations}
\end{proof}



\section{Proof of \Cref{thm:graph}}\label{app:graph}
\bigboy*

\begin{proof}
Given the factorization in \cref{fig:causal-graph-b}, we derive the following equation using the independence assumptions given in \cref{fig:causal-graph-b}:
\begin{subequations}
\begin{align}
\pdo(\word, \bhbot, \bhpar, \concept) 
&=p(\word \mid \bhbot, \bhpar)\,p(\bhbot \mid \mathrm{do}\left(\rvC = \concept\right))\,\pdo(\bhpar \mid \mathrm{do}\left(\rvC = \concept\right))\,p(\concept) &  \\
&= p(\word \mid \bhbot, \bhpar)
p(\bhbot)\,\pdo(\bhpar \mid \concept)\,p(\concept)  \label{eq:do-joint-derivation}
\end{align}
\end{subequations}

\paragraph{Erasure.}

Given \cref{eq:do-joint-derivation}, we have the following joint distribution
\begin{subequations}
\begin{align}
\pdo(\concept, \bhbot) &= \sum_{\bhpar \in \repspacepar} \sum_{\word \in \alphabet} \pdo(\word, \bhbot, \bhpar, \concept)   \\
&= \sum_{\bhpar \in \repspacepar} \sum_{\word \in \alphabet} p(\word \mid \bhbot, \bhpar)
p(\bhbot)\,\pdo(\bhpar \mid \concept)\,p(\concept) \\
&= \sum_{\bhpar \in \repspacepar} \underbrace{\left(\sum_{\word \in \alphabet} p(\word \mid \bhbot, \bhpar) \right)}_{=1}
p(\bhbot)\,\pdo(\bhpar \mid \concept)\,p(\concept)  \\
&= \underbrace{\left(\sum_{\bhpar \in \repspacepar} 
\pdo(\bhpar \mid \concept) \right)}_{=1}\,p(\bhbot)\,p(\concept) &  \\
&= p(\bhbot)\,p(\concept) & \label{eq:do-bot-c-independence}
\end{align}
\end{subequations}
The mutual information $\MI(\rvC ; \rvHerase)$ can be computed as follows 
\begin{subequations}
\begin{align}
    \MI(\rvC ; \rvHerase) &= \sum_{\concept \in \concepts}\sum_{\bhbot \in \repspacebot} \pdo(\concept, \bhbot) \log \frac{\pdo(\concept, \bhbot)}{p(\concept) p(\bhbot)} & \\
    &= \sum_{\concept \in \concepts}\sum_{\bhbot \in \repspacebot} \pdo(\concept, \bhbot) \log \frac{p(\bhbot)\,p(\concept)}{p(\concept) p(\bhbot)} & \justify{applying \cref{eq:do-bot-c-independence}}  \\
    &= 0 < \varepsilon \label{eq:graph-erasure-proof} &
\end{align}
\end{subequations}
for every $\varepsilon > 0$.

\paragraph{Encapsulation.}
The following equation holds given \cref{eq:do-joint-derivation}
\begin{subequations}
\begin{align}
     \MI(\rvC; \rvH) - \MI(\rvC; \rvHconcept) &= \MI(\rvC; \rvHconcept,\rvHerase ) - \MI(\rvC; \rvHconcept) &  \justify{$\rvH = \rvHerase, \rvHconcept$} \\
    &= \MI(\rvC; \rvHerase \mid \rvHconcept ) &  \\
    &= \MI(\rvC; \rvHerase ) & \justify{$\rvHerase, \rvHconcept$ are independent (\cref{sec:counterfactual})}  \\
    &= 0 < \varepsilon & \justify{applying \cref{eq:graph-erasure-proof}}   \\
\end{align}
\end{subequations}

\paragraph{Containment.}
The following joint distribution can be derived from \cref{eq:do-joint-derivation}
\begin{subequations}
\begin{align}
 \pdo(& \word, \bhpar, \concept = \concept)  
=  \sum_{\bhbot \in \repspacebot} \pdo(\word, \bhbot, \bhpar, \concept = \concept)  & \\
=&  \sum_{\bhbot \in \repspacebot} p(\word \mid \bhbot, \bhpar)
p(\bhbot)\,\pdo(\bhpar \mid \concept = \concept)\,p(\concept = \concept)  & \\
=&  \sum_{\bhbot \in \repspacebot} \frac{p(\word, \bhbot, \bhpar)}{p(\bhbot, \bhpar)}
p(\bhbot)\,\pdo(\bhpar \mid \concept = \concept)\,p(\concept = \concept)  & \\
=& \sum_{\bhbot \in \repspacebot} \frac{p(\word, \bhbot, \bhpar)}{\cancel{p(\bhbot)}\,p(\bhpar)} \cancel{p(\bhbot)} \,\pdo(\bhpar \mid \concept = \concept)\,p(\concept = \concept)  & \justify{$\rvHerase, \rvHconcept$ are independent (\cref{sec:counterfactual})}  \\
=& \underbrace{\sum_{\bhbot \in \repspacebot} p(\word,\bhbot \mid \bhpar)}_{=p(\word \mid \bhpar)} \,\pdo(\bhpar \mid \concept = \concept)\,p(\concept = \concept)  &  \\
=& \, p(\word \mid \bhpar) \,\pdo(\bhpar \mid \concept = \concept)\,p(\concept = \concept)  &  \\
=& \, p(\word \mid \bhpar, \concept=\concept) \,p(\bhpar \mid \concept=\concept)  &  \justify{$\rvHconcept$ is deterministic given $\rvC$ } \label{eq:do-par-x-independence}
\end{align}
\end{subequations}
The mutual information $\MI(\rvX; \rvHconcept \mid \rvC = \concept)$ can be computed as follows 
\begin{subequations}
\begin{align}
&    \MI(\rvX; \rvHconcept \mid \rvC = \concept) \\
    &= \sum_{\word \in \alphabet} \sum_{\bhpar \in \repspacepar} \pdo(\word, \bhpar,\concept=\concept) \log \frac{\pdo(\word, \bhpar,\concept=\concept)}{p(\word \mid \concept=\concept)  p(\bhpar|\concept=\concept)} &  \\
    &= \sum_{\word \in \alphabet} \sum_{\bhpar \in \repspacepar} \pdo(\word, \bhpar,\concept=\concept) \log \frac{p(\word \mid \bhpar, \concept=\concept) \,p(\bhpar \mid \concept=\concept)}{p(\word \mid \concept=\concept) \,p(\bhpar \mid \concept=\concept)} & \justify{applying \cref{eq:do-par-x-independence}}  \\
    &= \sum_{\word \in \alphabet} \sum_{\bhpar \in \repspacepar} \pdo(\word, \bhpar,\concept=\concept) \log \frac{p(\word \mid \bhpar, \concept=\concept) \,p(\bhpar \mid \concept=\concept)}{p(\word \mid \bhpar , \concept=\concept) \,p(\bhpar \mid \concept=\concept)} & \justify{$\rvHconcept$ is deterministic given $\rvC$}  \\
    &= 0 < \varepsilon  & \label{eq:graph-containment-proof}
\end{align}
\end{subequations}

\paragraph{Stability.}
The following equation holds given \cref{eq:do-joint-derivation}
\begin{subequations}
\begin{align}
    & \MI( \rvX; \rvH \mid \rvC = \concept) - \MI(\rvX; \rvHerase \mid \rvC = \concept) & \\ = 
    &\MI(\rvX; \rvHerase, \rvHconcept \mid \rvC = \concept) - \MI(\rvX; \rvHerase \mid \rvC = \concept) & \justify{$\rvH = (\rvHerase, \rvHconcept)$}  \\
    = &\MI(\rvX; \rvHconcept \mid \rvHerase, \rvC = \concept) & \justify{conditional mutual information}  \\
    = &\MI(\rvX; \rvHconcept \mid \rvC = \concept) & \justify{$\rvHerase, \rvHconcept$ are independent (\cref{sec:counterfactual})}  \\
    = &0 < \varepsilon & \justify{applying \cref{eq:graph-containment-proof}}  
\end{align}
\end{subequations}

\end{proof}

\section{Concept Word Lists and CEBaB Prompts} \label{app:conceptwordsprompts}

\begin{table}[!h]
    \centering \small
    \begin{tabular}{cl} \toprule
       \multirow{5}*{Ambiance}  & The ambiance was \\
         & The atmosphere was \\
         & The restaurant was \\
         & The vibe was \\
         & The setting was \\ \midrule
       \multirow{5}*{Food}  & The cuisine was \\
         & The dishes were \\
         & The meal was \\
         & The food was \\
         & The flavors was \\ \midrule
       \multirow{5}*{Noise}  & The ambient noise level was \\
         & The background noise was \\
         & The surrounding sound was \\
         & The auditory atmosphere was \\
         & The ambient soundscape was \\ \midrule
       \multirow{5}*{Service}  & The service was \\
         & The staff was \\
         & The hospitality extended by the staff was \\
         & The waiter was \\
         & The host was \\ \bottomrule
    \end{tabular}
    \caption{List of prompts for CEBaB dataset.}
    \label{tab:app-cebab-prompts}
\end{table}

\begin{table}[!h]
    \centering \small
    \begin{tabular}{ccp{10cm}} \toprule
       \multirow{2}*{\conceptf{verbal-number}} & \conceptf{sg} & absorbs, accepts, accompanies, accounts, achieves, acknowledges, activates, adds, addresses, administers, admits, adopts, advises, advocates, affects, agrees, aims, allows, announces, appears, applies, appoints, \textellipsis \\
        &  \conceptf{pl}  & absorb, accept, accompany, account, achieve, acknowledge, activate, add, address, administer, admit, adopt, advise, advocate, affect, agree, aim, allow, announce, appear, apply, appoint, \textellipsis \\ \midrule
       \multirow{2}*{\conceptf{grammatical-gender}} &  \conceptf{msc}  & abbatial, absolu, actif, actuel, additionnel, administratif, afro-américain, agressif, aigu, algérien, allemand, alsacien, amer, américain, ancien, annuel, architectural, arménien, artificiel, artisanal, \textellipsis \\
        &  \conceptf{fem}  & abbatiale, absolue, active, actuelle, additionnelle, administrative, afro-américaine, agressive, aiguë, algérienne, allemande, alsacienne, amère, américaine, ancienne, annuelle, architecturale, arménienne, artificielle, artisanale, \textellipsis \\ \midrule
    \end{tabular}
    \caption{Subset of linguistic concept word lists.}
    \label{tab:app-word-list-linguistic}
\end{table}

\begin{table}[!b]
    \centering \small
    \begin{tabular}{ccp{12cm}} \toprule
       \multirow{2}{*}{\conceptf{ambiance}} & \conceptf{pos} & cozy, elegant, inviting, charming, welcoming, intimate, sophisticated, tranquil, lively, romantic, chic, rustic, vibrant, serene, stylish, eclectic, enchanting, upscale, warm, bustling, idyllic, exquisite, radiant, harmonious, blissful, alluring, picturesque, opulent, sumptuous, dreamy, luxurious, polished, effervescent, enthralling \\
        & \conceptf{neg}  & dingy, claustrophobic, dreary, uninviting, dull, sterile, disorganized, loud, cramped, stale, unpleasant, chaotic, gaudy, tacky, uncomfortable, grimy, stuffy, depressing, drab, cold, seedy, pretentious, overcrowded, gloomy, oppressive, grim, tense, repellent, muggy, sullen, bland, repugnant, dismal, shabby \\ \midrule
       \multirow{2}{*}{\conceptf{food}} & \conceptf{pos} & delicious, mouthwatering, flavorful, delectable, savory, scrumptious, tasty, heavenly, exquisite, succulent, aromatic, satisfying, appetizing, divine, gourmet, nutritious, fresh, yummy, fragrant, sumptuous, delightful, rich, zesty, indulgent, juicy, decadent, balanced, sweet, tangy, refreshing, warm, tender, crispy, herbacious, spiced \\
        & \conceptf{neg}  & tasteless, bland, overcooked, undercooked, stale, soggy, greasy, unappetizing, flavorless, dry, tough, burnt, rancid, unpalatable, watery, disgusting, sour, bitter, mushy, unpleasant, unappealing, foul, off-putting, insipid, rubbery, dull, moldy, spoiled, repulsive, stinky, unbalanced, salty, fatty, stringy, oily \\ \midrule
       \multirow{2}{*}{\conceptf{noise}} & \conceptf{pos} & vibrant, lively, energetic, buoyant, festive, animated, cheerful, convivial, invigorating, buzzy, jubilant, pulsating, tranquil, serene, calm, peaceful, quiet, relaxed, soothing, gentle, mellow, harmonious, rejuvenating, vivacious, resonant \\
        & \conceptf{neg}  & disruptive, deafening, chaotic, clamorous, unruly, boisterous, raucous, overwhelming, harsh, grating, jarring, unpleasant, discordant, intrusive, irritating, nerve-wracking, agitated, distracting, unbearable, silent, loud, obnoxious, booming, cacaphonous, blaring \\ \midrule
       \multirow{2}{*}{\conceptf{service}} & \conceptf{pos} & attentive, friendly, efficient, professional, courteous, prompt, welcoming, accommodating, hospitable, personable, polished, gracious, knowledgeable, warm, engaging, diligent, exemplary, seamless, outstanding, enthusiastic, meticulous, personalized, considerate, anticipatory, respectful, reliable, consistent, thoughtful, empathetic, genuine, polite, proactive, adaptable, detail-oriented, impeccable \\
        & \conceptf{neg}  & inattentive, slow, rude, incompetent, dismissive, disorganized, impersonal, unprofessional, neglectful, indifferent, abrupt, inefficient, aloof, uncaring, clueless, forgetful, disrespectful, aggressive, insubordinate, inept, unfriendly, unaccommodating, inconsiderate, arrogant, sloppy, unknowledgeable, intrusive, negligent, careless, unresponsive, unreliable, distracted, impolite, disinterested, discourteous \\ \bottomrule
    \end{tabular}
    \caption{Full list of CEBaB concept words.}
    \label{tab:app-word-list-cebab}
\end{table}

\end{document}